\documentclass[11pt]{article}

\usepackage{arxiv}
\usepackage[utf8]{inputenc}
\usepackage{times}
\usepackage{url}
\usepackage{booktabs}
\usepackage{amsmath,amssymb,amsfonts,amsthm}
\usepackage{nicefrac}
\usepackage{microtype}
\usepackage{xcolor}
\usepackage{graphicx}
\usepackage{multirow}
\usepackage{array}
\usepackage[numbers,sort&compress]{natbib}
\usepackage{algorithm}
\usepackage{algpseudocode}
\usepackage{enumitem}
\usepackage{tikz}
\usetikzlibrary{positioning,arrows.meta,calc}
\usepackage[hidelinks]{hyperref}
\usepackage{doi}
\usepackage[capitalise,nameinlink]{cleveref}
\usepackage{caption}
\usepackage{float}
\setlist{nosep,leftmargin=*}

\definecolor{yel}{HTML}{F7D774}\definecolor{yelD}{HTML}{B8860B}
\definecolor{pur}{HTML}{C9A7E0}\definecolor{purD}{HTML}{6E3B9E}
\definecolor{blu}{HTML}{8DB6EC}\definecolor{bluD}{HTML}{2A5DA8}
\definecolor{grn}{HTML}{9CCF8F}\definecolor{grnD}{HTML}{3D7A2F}
\definecolor{ora}{HTML}{F6C38F}\definecolor{oraD}{HTML}{C0571B}
\definecolor{tea}{HTML}{8ED1C6}\definecolor{teaD}{HTML}{217A6E}

\newtheorem{proposition}{Proposition}

\newcommand{\sj}{\textsc{Shift-Jev}}
\newcommand{\jev}{LLM-as-Jev}
\newcommand{\tshift}{\tau_{\mathrm{shift}}}

\renewcommand{\shorttitle}{\sj: Shift-Aware Abstention and Adaptation}

\title{\sj: Certified Abstention and Gain-Gated Continual Adaptation for LLM-Based Decision Models under Distribution Shift}

\author{%
  Muhammad Atif Saeed \\
  {\small Department of AI \& Data Science} \\
  {\small National University of Computer \& Emerging Sciences} \\
  {\small Islamabad, Pakistan} \\
  {\small\texttt{atif.saeed@isb.nu.edu.pk}}
  \And
  Azka Atiq \\
  {\small Department of AI \& Data Science} \\
  {\small National University of Computer \& Emerging Sciences} \\
  {\small Islamabad, Pakistan} \\
  {\small\texttt{azka.atiq@isb.nu.edu.pk}}
  \AND
  \rule{0pt}{16pt}Adarsh Kumar \\
  {\small Data Engineer} \\
  {\small University of Arkansas at Little Rock} \\
  {\small Arkansas, USA} \\
  {\small\texttt{akumar1@ualr.edu}}
  \And
  \rule{0pt}{16pt}Akhtar Jamil \\
  {\small Department of Computer Science} \\
  {\small National University of Computer \& Emerging Sciences} \\
  {\small Islamabad, Pakistan} \\
  {\small\texttt{akhtar.jamil@nu.edu.pk}}
  \And
  \rule{0pt}{16pt}Uzair Iqbal \\
  {\small Department of Software Engineering} \\
  {\small Faculty of Computer Science \& Information Technology} \\
  {\small University of Malaya, Malaysia} \\
  {\small\texttt{uzairiqbal@um.edu.my}}
  \And
  \rule{0pt}{16pt}Roland Aydin \\
  {\small German Research Center for } \\
  {\small Artificial Intelligence (DFKI)} \\
  {\small Saarbrücken, Germany} \\
  {\small\texttt{roland.aydin@uni-saarland.de}}
}
\date{October 2026}

\begin{document}
\maketitle

\begin{abstract}
Large language models (LLMs) can act as decision models that return a probability distribution over a fixed set of options, but deployed decision models face domain, label-prior, emerging-class and candidate-set shift. We study how such models fail under shift and propose \sj{}, a monitoring, abstention and continual-adaptation procedure for them. It scores the risk of each decision from uncertainty, format and option-consistency signals, certifies abstention thresholds with Learn-then-Test, monitors the decision stream with drift and label-shift detectors, and accepts a model update only when an expected-gain gate finds a significant held-out improvement without loss on earlier regimes. On a benchmark of ten shifted test environments with Qwen3-0.6B and Qwen3-4B (three seeds each) and Phi-4-mini, fine-tuning with KL anchors and temperature scaling lowers calibration error under shift from 0.289 to 0.087. At a target risk of 10\%, a learned uncertainty score with a certified threshold exceeds the target in 3--5 of 27 environment--seed tests, against 5--15 for a max-probability score with the same certification and 9--14 for conformal prediction sets; drift-based routing changes little. A predicted-label $\chi^2$ test detects 95--99\% of class-prior shifts that confidence- and embedding-based detectors miss. In continual adaptation \sj{} reaches 70.5\% accuracy on Qwen3-0.6B, above replay, LwF, EWC and fine-tuning baselines in every seed, whereas on Qwen3-4B its conservative gate leaves it 0.4--1.4 points behind them.
\end{abstract}

\keywords{LLM decision models \textperiodcentered{} selective prediction \textperiodcentered{} conformal risk control \textperiodcentered{} distribution shift \textperiodcentered{} continual learning}

\section{Introduction}
\label{sec:intro}

Many software systems choose among a small set of predefined options: they route a support ticket, identify the intent of a customer message, check a document against a rule or grade a model response. General chat LLMs can perform these tasks, but their free-form answers must be parsed, they are slow to generate and they carry no usable confidence. \emph{Decision models} such as Jev~\citep{almeida2026jev} instead return a probability distribution over the candidate options in a single call, which downstream code consumes directly. Such models have recently been used as cheap first-stage judges~\citep{li2026jevjudge,rao2026jevrubric}, as detectors of alignment failures~\citep{guo2026justaskjev}, for social-science annotation~\citep{denney2026jevpolsci} and for coding administrative records~\citep{rafe2026crash}. \citet{li2026llmasjev} showed that an ordinary instruction-tuned LLM already behaves as a decision model (\jev): listing the options with bracketed numbers, pre-filling ``\texttt{Best answer: [}'' and scoring each option identifier through a prefix-free trie yields calibrated decisions without training, and a tree-factorized loss with KL anchors fine-tunes the model without damaging its chat behaviour.

These evaluations use test data that resembles the calibration data. Deployment does not. A routing model meets new products and phrasing (\emph{domain shift}), changing frequencies of the correct options (\emph{class-prior shift}), requests that no offered option covers (\emph{emerging classes}), and option lists that are reordered, reworded, extended or missing the right answer (\emph{candidate-set shift}). Because software acts on the returned probabilities automatically, a model that is confidently wrong under shift is more harmful than one that fails visibly. LLM judges are over-confident~\citep{tian2025overconfidence}, conformal guarantees for LLMs fail under dataset shift unless re-verified~\citep{kotte2026crccertify,lin2026dscp}, risk controlled over a population can exceed its budget within groups~\citep{salem2026hgcrc,rahman2026rolecrc}, and sequential fine-tuning can collapse coverage before it lowers accuracy~\citep{shihab2026continualcal}. No existing work studies how LLM decision models behave under these shifts or how to keep their errors within a budget.

We address three research questions. \textbf{RQ1}: how do the accuracy and calibration of frozen and fine-tuned LLM decision models degrade under the four shift types? \textbf{RQ2}: can abstention keep the realized error rate of answered inputs below a target $\alpha$ under shift, and which signals and threshold rules achieve this? \textbf{RQ3}: when labelled data from a new regime arrive, can the model adapt without forgetting earlier regimes while declining updates that do not help?

We answer them with \sj{} (\cref{fig:framework}), which adds three modules to the \jev{} interface. Module~A monitors each decision and the decision stream. Module~B abstains when a learned risk score exceeds a threshold certified by Learn-then-Test; drift distances are used to route inputs to a threshold but never enter the score. Module~C adapts the LoRA adapter to a new regime and keeps the update only when it measurably helps.

This paper makes three contributions. First, it introduces a shift benchmark for LLM decision models with one in-distribution and ten shifted test environments covering all four shift types, seven external benchmarks at the base paper's sample sizes, stream protocols at three severities and a four-stage continual protocol that includes a no-shift control stage. Second, it proposes a monitoring, certified-abstention and gain-gated adaptation procedure and isolates which of its parts matter: the risk score and the data used for certification matter, drift-based routing does not, label-shift monitoring is essential, and the accuracy gains of adaptation come mainly from replay and severity-scaled anchoring, with the gate acting chiefly as a safeguard against useless updates. Third, it reports a comparison with six abstention rules, five continual-learning baselines and a range of ablations over two Qwen3 sizes with three seeds each and a second model family, using statistics that do not treat repeated evaluations of the same items as independent, and it states where \sj{} does not help.

\begin{figure}[!htbp]
  \centering
  \resizebox{\textwidth}{!}{
\begin{tikzpicture}[
  font=\sffamily\fontsize{8}{9.6}\selectfont, >=Stealth,
  box/.style={draw, rounded corners=2.5pt, align=center, execute at begin node={\hyphenpenalty=10000\exhyphenpenalty=10000}, minimum height=12mm, inner sep=3pt, line width=0.6pt},
  y1/.style={box, fill=yel!35, draw=yelD}, y2/.style={box, fill=yel!85, draw=yelD, line width=1pt},
  pu/.style={box, fill=pur!22, draw=purD}, bl/.style={box, fill=blu!18, draw=bluD},
  gr/.style={box, fill=grn!30, draw=grnD}, rd/.style={box, fill=red!15, draw=red!60!black},
  te/.style={box, fill=tea!20, draw=teaD, minimum height=14mm}, gy/.style={box, fill=black!6, draw=black!50},
  ar/.style={->, line width=0.8pt, draw=bluD},
  lab/.style={font=\sffamily\bfseries\fontsize{9.5}{11}\selectfont, anchor=west},
  sub/.style={font=\sffamily\fontsize{8}{9.6}\selectfont, anchor=west},
  note/.style={font=\sffamily\fontsize{7}{8.4}\selectfont, text=bluD, fill=white, inner sep=1pt}
]
\node[lab] at (-0.45,1.05) {Stage 1\quad \jev{} decision model};
\node[gy, text width=24mm] (stream) at (1.3,0) {\textbf{Decision request}\\ state $x$, options\\ $o_1,\dots,o_K$};
\node[y1, text width=29mm] (iface) at (4.75,0) {\textbf{Answer interface}\\ numbered options\\ prefill \texttt{Best answer: [}};
\node[y2, text width=31mm] (llm) at (8.55,0) {\textbf{LLM backbone + LoRA}\\ Qwen3-0.6B or 4B\\ (tree loss + KL anchors)};
\node[y1, text width=29mm] (trie) at (12.35,0) {\textbf{Trie scoring}\\ one packed forward pass\\ gives $P(k\mid x)$};
\node[gy, text width=25mm] (sig) at (15.95,0) {\textbf{Signals}\\ $P(k\mid x)$, root mass,\\ embedding $h(x)$};
\draw[ar] (stream) -- (iface); \draw[ar] (iface) -- (llm); \draw[ar] (llm) -- (trie); \draw[ar] (trie) -- (sig);
\draw[black!35, dashed] (-0.45,-1.0) -- (17.25,-1.0);
\node[lab] at (-0.45,-1.4) {Stage 2\quad Reliability layer \textnormal{\fontsize{8}{9.6}\selectfont(Module A: monitoring; Module B: certified abstention)}};
\draw[line width=0.8pt, draw=bluD] (sig.south) -- (15.95,-1.85) -- (-0.3,-1.85) -- (-0.3,-4.85);
\draw[ar] (-0.3,-3.1) -- (0.2,-3.1);
\draw[ar] (-0.3,-4.85) -- (0.2,-4.85);
\node[sub, text=bluD] at (0.2,-2.25) {scoring lane};
\node[bl, text width=33mm] (feat) at (1.95,-3.1) {\textbf{Risk features}\\ uncertainty, trie format,\\ rotation consistency};
\node[bl, text width=33mm, fill=blu!32] (risk) at (5.95,-3.1) {\textbf{Risk score} $r(x)$\\ logistic model fitted on\\ ID + proxy-shift data};
\node[sub, text=purD] at (0.2,-4.0) {routing lane};
\node[pu, text width=33mm] (drift) at (1.95,-4.85) {\textbf{Input drift}\\ $k$-NN distance of the\\ option-invariant $h(x)$};
\node[pu, text width=33mm] (route) at (5.95,-4.85) {\textbf{Router + detectors}\\ regime $g$ or unknown;\\ CUSUM, label-$\chi^2$ alarms};
\draw[ar] (feat) -- (risk);
\draw[ar, draw=purD] (drift) -- (route);
\node[box, draw=bluD, line width=1.1pt, fill=white, text width=37mm, minimum height=31mm] (dec) at (10.95,-3.975) {\textbf{Certified decision}\\[2pt] answer iff $r(x)\le\tau$\\[2pt] $\tau=\tau_g$ for a known regime\\ $\tau=\tau_{\mathrm{shift}}$ if unknown\\ or after an alarm\\[2pt] {\fontsize{7}{8.4}\selectfont $\tau_g$ certified per regime\\ by Learn-then-Test}};
\draw[ar] (risk.east) -- node[note, above]{$r(x)$} (risk.east -| dec.west);
\draw[ar, draw=purD] (route.east) -- node[note, above, text=purD]{$\tau_g$ or $\tau_{\mathrm{shift}}$} (route.east -| dec.west);
\node[gr, text width=22mm, minimum height=10mm] (pred) at (15.95,-3.1) {\textbf{Answer} $\hat y$};
\node[rd, text width=22mm, minimum height=10mm] (abst) at (15.95,-4.85) {\textbf{Abstain / defer}};
\draw[ar, draw=grnD] (pred.west -| dec.east) -- node[note, above, text=grnD]{yes} (pred.west);
\draw[ar, draw=red!60!black] (abst.west -| dec.east) -- node[note, above, text=red!60!black]{no} (abst.west);
\draw[black!35, dashed] (-0.45,-5.95) -- (17.25,-5.95);
\node[lab] at (-0.45,-6.4) {Stage 3\quad Module C: continual adaptation with an expected-gain gate};
\node[gy, text width=24mm] (data) at (1.3,-7.65) {\textbf{New regime data}\\ labelled training\\ + held-out items};
\node[te, text width=29mm] (sev) at (4.75,-7.65) {\textbf{Measure shift}\\ drift and accuracy\\ tests $\rightarrow$ severity $\hat s$};
\node[te, text width=31mm] (adapt) at (8.55,-7.65) {\textbf{Severity-scaled update}\\ new data + replay,\\ anchored to previous model};
\node[te, text width=29mm, fill=tea!45, line width=1pt] (gate) at (12.35,-7.65) {\textbf{Expected-gain gate}\\ keep if held-out gain\\ is significant and old\\ regimes hold, else\\ roll back};
\node[te, text width=25mm] (ref) at (15.95,-7.65) {\textbf{Refresh}\\ add regime $g$ and\\ certify its $\tau_g$};
\draw[ar] (data) -- (sev); \draw[ar] (sev) -- (adapt); \draw[ar] (adapt) -- (gate);
\draw[ar] (gate) -- (ref);
\node[font=\sffamily\fontsize{7}{8.4}\selectfont, text=oraD, anchor=north east] at (17.2,-8.65) {$\circlearrowleft$ the adapter and thresholds $\{\tau_g\}$ are redeployed in Stages 1--2};
\end{tikzpicture}}
  \caption{Overview of \sj. \textbf{Stage~1}: the \jev{} decision model scores numbered options through a prefix-free trie and exposes decision probabilities, the probability mass on legal identifier tokens and a state embedding. \textbf{Stage~2}: in the scoring lane, uncertainty, format and consistency features give the risk score $r(x)$; in the routing lane, an option-invariant drift distance assigns the input to a known regime $g$ or to \emph{unknown} and drives the stream detectors. The model answers only if $r(x)$ is below the threshold certified for that regime, or below the conservative $\tshift$ for unknown inputs and after an alarm. \textbf{Stage~3}: a labelled batch from a new regime drives a severity-scaled LoRA update that is kept only if it significantly improves held-out accuracy without losing more than 5 points on earlier regimes.}
  \label{fig:framework}
\end{figure}

\section{Related Work}
\label{sec:related}

\paragraph{Decision models and LLM judges.}
Jev-style decision models~\citep{almeida2026jev} return calibrated probabilities over typed options and are ranked on JevBench~\citep{jevbench2026}. \citet{li2026llmasjev} extract the same behaviour from open LLMs with an architecture-preserving answer interface, a training-free trie-scoring recipe and a tree-factorized listwise fine-tuning loss with KL anchors on legal-token mass, illegal-token distribution and context positions. Work on LLM-as-a-judge~\citep{zheng2023judge} has shown that judgment \emph{distributions} carry more information than greedy verdicts~\citep{wang2025judgmentdist}, that judges are over-confident~\citep{tian2025overconfidence}, biased~\citep{gao2025judgebias} and benefit from distribution-matching training~\citep{li2025judgingconfidence} or reasoning~\citep{chen2025judgelrm}, and that reported judge accuracies require bias correction~\citep{lee2026reportjudge} and reproducible protocols~\citep{lushtaku2026judgearena}. Confidence routing between a cheap decision model and an expensive judge is effective in-distribution~\citep{li2026jevjudge,zhao2025confdiversity}, but \citet{rao2026jevrubric} find that decision models and LLM judges err on the same items, which limits cascades, and \citet{li2026jevjudge} report that routing weakens on style-adversarial inputs. Multiple-choice selection by LLMs is sensitive to option order and identifiers~\citep{robinson2023mcqa,zheng2024selectors,pezeshkpour2024order}, which motivates our candidate-set shifts and rotation-consistency signal. None of these works studies decision models under distribution shift or provides risk guarantees.

\paragraph{Uncertainty and calibration of LLMs.}
Temperature scaling~\citep{guo2017calibration} remains a strong post-hoc calibrator for pre-trained transformers~\citep{desai2020calibration}. LLM uncertainty can be read from token probabilities~\citep{kadavath2022know}, elicited verbally~\citep{xiong2024uncertainty} or estimated from semantic dispersion~\citep{farquhar2024semantic}; for decision models the categorical output makes probability-based scores natural. UniCR~\citep{oehri2025unicr} fuses heterogeneous evidence into a calibrated correctness probability and enforces an error budget by refusal, and robust-satisficing aggregation~\citep{cetinkaya2026rse} stabilizes multi-LLM decisions under shift. Calibration is known to deteriorate under dataset shift~\citep{ovadia2019trust}.

\paragraph{Selective prediction and conformal risk control.}
Selective classification trades coverage for risk~\citep{elyaniv2010foundations,geifman2017selective}, with the maximum softmax probability as the classical score~\citep{hendrycks2017baseline}. Conformal prediction~\citep{vovk2005algorithmic,angelopoulos2023gentle} and its risk-controlling extensions---RCPS~\citep{bates2021rcps}, conformal risk control~\citep{angelopoulos2024crc} and Learn-then-Test (LTT)~\citep{angelopoulos2021ltt}---give finite-sample guarantees under exchangeability, and Mondrian (group-conditional) variants give per-group guarantees~\citep{vovk2005algorithmic}. For LLMs these tools have been applied to multiple-choice QA~\citep{kumar2023conformalmcqa}, generation~\citep{quach2024conformallm} and factuality~\citep{mohri2024factuality}. Very recent work studies when conformal risk control can certify structured LLM outputs and shows that guarantees fail under dataset shift unless re-verified~\citep{kotte2026crccertify}, proposes similarity-weighted calibration for domain shift~\citep{lin2026dscp}, and enforces risk per group hierarchy~\citep{salem2026hgcrc} or per argument role~\citep{rahman2026rolecrc}. Beyond exchangeability, weighted~\citep{barber2023beyond}, adaptive~\citep{gibbs2021adaptive} and label-shift-aware~\citep{podkopaev2021labelshift} conformal methods exist, and online PAC-Bayes certificates have been proposed for forecasting under shift~\citep{huang2026certificate}. \sj{} is closest to group-conditional risk control: it certifies one LTT threshold per known regime, routes inputs to regimes with an option-invariant drift monitor, and falls back to a conservative threshold for unknown inputs and after stream alarms.

\paragraph{Shift and out-of-distribution detection.}
Dataset-shift detection compares feature or prediction distributions with two-sample tests~\citep{rabanser2019failing}; BBSE~\citep{lipton2018bbse} detects and corrects label shift from a black-box predictor's confusion matrix. Embedding-based OOD scores include Mahalanobis~\citep{lee2018mahalanobis}, relative Mahalanobis~\citep{ren2021rmd}, deep $k$-NN~\citep{sun2022knn} and energy~\citep{liu2020energy}; sequential change detection uses CUSUM~\citep{page1954cusum}. Concept drift in streams~\citep{gama2014survey} and zero-day settings where test data come from unseen distributions~\citep{alsiam2026zeroday} motivate continuous monitoring, and LLM-mediated continuous adaptation has been proposed for anomaly detection streams~\citep{devireddy2025calm}. WILDS~\citep{koh2021wilds} benchmarks in-the-wild shifts but not candidate-set shift, which is specific to decision models.

\paragraph{Continual learning for LLMs.}
Classical remedies for catastrophic forgetting are regularization (EWC~\citep{kirkpatrick2017ewc}), distillation (LwF~\citep{li2017lwf}) and replay~\citep{rolnick2019replay,chaudhry2019tiny,lopezpaz2017gem}; see \citet{wang2024clsurvey} and, for LLMs, \citet{shi2025cllm}. LLMs forget during continual fine-tuning~\citep{luo2023forgetting}, although LoRA forgets less than full fine-tuning~\citep{biderman2024lora}. Recent work schedules replay by parameter change~\citep{feng2026forever} and shows that conformal coverage collapses before accuracy under sequential fine-tuning, fixable by per-task recalibration~\citep{shihab2026continualcal}---an observation that our per-regime recertification (B6) also exploits. The KL anchors of \jev~\citep{li2026llmasjev} act as a stability regularizer; we extend them to previous-stage anchors and couple their strength to measured shift severity.

\Cref{tab:related} positions \sj{} against the closest lines of work.

\begin{table}[!htbp]
\centering\small
\caption{Positioning. \checkmark: addressed; $\circ$: partially; ---: not addressed.}
\label{tab:related}
\resizebox{\columnwidth}{!}{%
\begin{tabular}{lcccccc}\toprule
Work & Decision-model interface & Shift benchmark & Risk guarantee & Per-regime guarantee & Stream detection & Continual adaptation \\\midrule
\jev~\citep{li2026llmasjev} & \checkmark & --- & --- & --- & --- & --- \\
JEV-as-a-Judge~\citep{li2026jevjudge} & \checkmark & $\circ$ & $\circ$ & --- & --- & --- \\
UniCR~\citep{oehri2025unicr} & --- & --- & \checkmark & --- & --- & --- \\
HG-CRC~\citep{salem2026hgcrc} & --- & $\circ$ & \checkmark & \checkmark & --- & --- \\
DS-CP~\citep{lin2026dscp} & --- & \checkmark & $\circ$ & --- & --- & --- \\
Continual calibration~\citep{shihab2026continualcal} & --- & --- & $\circ$ & \checkmark & --- & \checkmark \\
FOREVER~\citep{feng2026forever} & --- & --- & --- & --- & --- & \checkmark \\
BBSE / Failing loudly~\citep{lipton2018bbse,rabanser2019failing} & --- & \checkmark & --- & --- & \checkmark & --- \\
\textbf{\sj{} (ours)} & \checkmark & \checkmark & \checkmark & \checkmark & \checkmark & \checkmark \\
\bottomrule\end{tabular}}
\end{table}


\section{Method}
\label{sec:method}

\subsection{Problem setting}
\label{sec:problem}
A decision instance consists of a state $x$ (instruction and input text), $K$ candidate options and, for training and evaluation, the index $y$ of the correct option. A decision model returns probabilities $P(k\mid x)$ and predicts $\hat y(x)=\arg\max_k P(k\mid x)$. A selective model with risk score $r(x)$ and threshold $\tau$ answers only when $r(x)\le\tau$. On a data distribution $\mathcal D$ its \emph{coverage} is the fraction of answered inputs and its \emph{selective risk} is the error rate among them,
\begin{equation}
  R_{\mathcal D}(\tau)=\Pr_{\mathcal D}\big[\hat y(x)\neq y \,\big|\, r(x)\le\tau\big].
  \label{eq:selrisk}
\end{equation}
The deployment stream passes through regimes $\mathcal D_0$ (in-distribution, ID), $\mathcal D_1,\mathcal D_2,\dots$, and a labelled batch may arrive after each regime. \sj{} aims to keep $R_{\mathcal D_t}(\tau)\le\alpha$ with high probability at the highest possible coverage, and to update the model only when the update improves accuracy on the new regime without degrading earlier ones.

\subsection{Decision model}
\label{sec:jev}
We follow \citet{li2026llmasjev}. Options are listed as \texttt{[1]}, \dots, \texttt{[K]} in the user turn and the assistant turn is pre-filled with ``\texttt{Best answer: [}''. The score of option $k$ is the log-probability of its identifier suffix $c_k=$ ``\texttt{k]}'',
\begin{equation}
  s_k=\log p_\theta(c_k\mid x),\qquad P(k\mid x)=\mathrm{softmax}_k(s_k/T),
  \label{eq:score}
\end{equation}
where $T$ is a temperature fitted on ID validation data. Because the suffixes form a prefix-free trie, all options are scored in one forward pass: we pack the trie nodes after the prompt with a tree-shaped attention mask so that each node attends only to the prompt and its own ancestors, which gives exactly the same scores as scoring each option separately. For fine-tuning we use the base paper's tree-factorized cross-entropy over the legal identifier tokens, plus KL anchors that tie the next-token distribution at the reply start and after the answer to the frozen model, which preserves normal text generation. Adapters are LoRA~\citep{hu2022lora} modules with rank 16 on all linear layers.

\subsection{Module A: monitoring}
\label{sec:moduleA}
Module~A computes four groups of signals for every decision without extra forward passes, except for the consistency check. \emph{Uncertainty} is described by the normalized entropy, the maximum probability, the margin between the two most likely options and the number of options. \emph{Format} is the log of the probability mass that the model places on legal identifier tokens at the trie root; low mass indicates that the model would rather produce something other than an option. \emph{Consistency} compares the decision distribution with the one obtained after cyclically rotating the options, using the total-variation distance and agreement of the two argmax decisions; because it doubles scoring cost, we use it only for the 0.6B model. \emph{Input drift} is the deep $k$-NN distance~\citep{sun2022knn} between the input's state embedding and a reference pool of ID inputs. The embedding is the mean final-layer hidden state over the tokens of the state text only, so it does not change when the options change.

Module~A also monitors the stream of decisions with three detectors: CUSUM~\citep{page1954cusum} on standardized per-decision signals, a windowed Kolmogorov--Smirnov test, and a black-box label-shift test in the spirit of BBSE~\citep{lipton2018bbse}, which compares the histogram of \emph{predicted} labels in a 50-item window with its reference frequencies by a $\chi^2$ test. Every alarm level is set by simulation so that at most 5\% of no-shift streams raise a false alarm.

\subsection{Module B: certified, regime-conditional abstention}
\label{sec:moduleB}
The risk score $r(x)$ is a logistic regression on the uncertainty, format and consistency signals, trained to predict errors on ID calibration data and on two proxy shifts (SciQ and Emotion) that are disjoint from all test shifts. Each calibration set is split once into two halves: the risk model is fitted, and candidate thresholds are proposed, on the first half only, and thresholds are certified on the second half only. Temperature scaling and the drift reference use separate ID validation data.

Drift distances are deliberately excluded from $r(x)$. A risk model fitted on two proxy shifts learns that ``far from the reference data'' means ``likely wrong'', which holds for some shifts, such as intent sets with many options, and fails for others, such as an easy emerging-class set or knowledge questions that resemble ID data but are answered wrongly with high confidence. Drift instead tells the system \emph{which population} an input comes from, so \sj{} uses it for routing. Each known regime $g$ has its own reference pool and threshold $\tau_g$. An input is assigned to the regime whose normalized $k$-NN distance is smallest, provided this distance is below the regime's 95th percentile; otherwise it is \emph{unknown}. Unknown inputs, and all inputs after a stream alarm, use the conservative threshold $\tshift$, the smaller of the thresholds certified on ID and on proxy-shift data, so a detected shift can only make the model more cautious. An earlier design that placed drift inside $r(x)$ is kept as an ablation.

Thresholds are certified by fixed-sequence Learn-then-Test (LTT)~\citep{angelopoulos2021ltt}. Candidate thresholds are proposed on the first calibration half, ordered from strict to permissive, and tested in that order on the second half: a threshold passes if the one-sided Clopper--Pearson~\citep{clopper1934} upper bound on its selective risk, at confidence $1-\delta$ with $\delta=0.1$, is below $\alpha$. Testing stops at the first failure and the last passing threshold is used, or the model abstains on everything if none passes. Because the risk model and the candidate thresholds depend only on the first half, the second half is used for a single test, and if calibration and test data of a regime are exchangeable the procedure guarantees $R_{\mathcal D_g}(\tau_g)\le\alpha$ with probability at least $1-\delta$ (Appendix~\ref{app:theory}). No such guarantee is possible for unseen shifts, so for them we report empirical violation rates.

\subsection{Module C: gain-gated continual adaptation}
\label{sec:moduleC}
When a labelled batch from a new regime arrives, Module~C first measures the shift. A binomial test checks whether more new inputs than expected lie beyond the reference 95th-percentile drift distance, and a Fisher exact test checks whether accuracy on the new data is lower than on the reference data. Their effect sizes give a severity $\hat s\in[0,1]$.

The adapter is then trained for one epoch on the new data, with replay of buffered examples from earlier regimes. The buffer keeps, per regime, correctly classified examples with the smallest decision margin together with random examples. Two terms protect earlier behaviour. Tree-local distillation keeps the previous model's option distributions on buffered examples~\citep{hinton2015distilling,li2017lwf}, and the \jev{} KL anchors are computed against the \emph{previous-stage} model instead of the original one. Severity sets the stability--plasticity balance: the anchor weight decreases as $\lambda_{\mathrm{KL}}=\lambda_0e^{-\kappa\hat s}$ with $\kappa=1.5$, while the distillation weight increases with $\hat s$, so that larger shifts allow more change on new inputs and more protection of old decisions.

The update is then judged by an \emph{expected-gain gate}, which asks whether the update makes the model better rather than whether the model has become worse. Halfway through training, a probe compares the checkpoint with the current model on held-out new-regime items. If no shift was detected and the checkpoint wins no more items than it loses, training stops. Otherwise the half-trained and fully trained candidates are each accepted only if a one-sided sign test on paired held-out correctness against the current model gives $p<0.10$ and their accuracy on buffered old-regime items drops by at most 5 points. The accepted candidate with the best combined new- and old-regime accuracy is deployed. If neither is accepted, the update is rolled back. In both cases the regime joins the replay buffer and the reference pools and receives its own certified threshold (system B6). The sign test bounds the probability of accepting an update that does not help by 0.10 per candidate (Appendix~\ref{app:theory}, which also gives the full procedure as pseudocode).

\subsection{Compared systems}
\label{sec:systems}
We compare six systems. \textbf{B1} is the frozen \jev{} model and \textbf{B2} is B1 fine-tuned on an ID mixture. \textbf{B3} adds temperature scaling~\citep{guo2017calibration} to B2. \textbf{B4} adds an uncertainty-only risk score with a single LTT threshold certified on pooled ID and proxy data. \textbf{B5} is the full \sj{} reliability layer with routing and per-regime thresholds, and \textbf{B6} is B5 after continual adaptation and per-regime recertification. The abstention baselines are the max-probability (MSP) score~\citep{hendrycks2017baseline} and the energy score~\citep{liu2020energy}, both with the same pooled LTT certification as B4, and split-conformal LAC prediction sets~\citep{vovk2005algorithmic}, calibrated on all ID and proxy calibration data, that answer only when the set contains a single option. LAC guarantees marginal coverage of the prediction set rather than selective risk. The continual-learning baselines are naive sequential fine-tuning, EWC~\citep{kirkpatrick2017ewc}, sequential fine-tuning with the base paper's KL anchors to the original model (Seq-FT+KL), experience replay~\citep{rolnick2019replay} and LwF~\citep{li2017lwf}. All baselines start from the same B2 adapter and use the same data, steps and LoRA capacity.

\section{Experimental Setup}
\label{sec:setup}

\Cref{tab:benchmark} summarizes the benchmark. The ID tasks are AG News~\citep{zhang2015agnews}, SST-2~\citep{socher2013sst}, BoolQ~\citep{clark2019boolq} and ARC-Challenge~\citep{clark2018arc}, each split into training (500 items per task for the 0.6B model and 250 for the 4B model), validation (100), calibration (200) and test (300) sets. SciQ~\citep{welbl2017sciq} and Emotion~\citep{saravia2018emotion} serve only as proxy shifts for calibration. The test shifts are domain shift to Banking77 with 77 intents~\citep{casanueva2020banking}, Twitter financial-news topics with 20 classes~\citep{twitterfin2022} and MMLU~\citep{hendrycks2021mmlu}; class-prior shift on AG News with imbalance ratios of 10 and 100; emerging classes on CLINC150~\citep{larson2019clinc}, with 15 candidate intents plus ``none of the above'' and 50\% out-of-scope requests; and four candidate-set perturbations of ID items: option rotation, label paraphrase and an added near-duplicate distractor, which leave the task unchanged, and removal of the correct option, which makes every answer wrong. External benchmarks follow the base paper: BoolQ, MMLU, MMLU-Pro~\citep{wang2024mmlupro}, ARC-Challenge, WinoGrande~\citep{sakaguchi2021winogrande}, SciQ and Banking77.

\begin{table}[!htbp]
\centering\small
\caption{Evaluation environments and the use of their data: training (Tr), temperature fitting (Val), risk-model fitting and certification (Cal) and testing (Test).}
\label{tab:benchmark}
\resizebox{\columnwidth}{!}{%
\begin{tabular}{llllr}\toprule
Group & Environment & Source & Role & $n$ (test) \\\midrule
ID & AG News, SST-2, BoolQ, ARC-C & 4 tasks & Tr / Val / Cal / Test & 1{,}200 \\
Proxy shift & SciQ, Emotion & 2 tasks & Cal only & --- \\
A. Domain & Banking77 ($K=77$), Twitter-Fin ($K=20$), MMLU ($K=4$) & 3 tasks & Test (+ adaptation) & $3\times500$ \\
B. Class prior & AG News, $\rho=10$ and $\rho=100$ & resampled & Test & $2\times300$ \\
C. Emerging classes & CLINC150 + ``none of the above'' ($K=16$) & 50\% out of scope & Test (+ adaptation) & 500 \\
D. Candidate set & rotation, paraphrase, distractor / gold removed & ID test items & Test & 153--204 each \\
External & BoolQ, MMLU, MMLU-Pro, ARC-C, WinoGrande, SciQ, Banking77 & paper sizes & Test & 6{,}456--7{,}456 \\
\bottomrule\end{tabular}}
\end{table}

Each detection stream contains 200 ID items followed by 200 items of which 25\%, 50\% or 100\% come from a shifted environment, and we simulate 50 streams per environment and severity. The continual protocol starts from the first-seed B2 adapter and visits Banking77, a control stage of fresh ID data in which adaptation cannot help, Twitter-Fin and CLINC150. Each stage supplies 300 training items and 260 held-out items: 80 for the gate, 20 for buffer validation and 160 for threshold certification. Accuracy on every regime seen so far is measured after each stage on 400 (0.6B) or 300 (4B) evaluation items per regime, and we report final average accuracy, backward transfer (BWT) and forgetting~\citep{lopezpaz2017gem,chaudhry2019tiny}.

The backbones are Qwen3-0.6B~\citep{yang2025qwen3}, the small model of the base paper, Qwen3-4B-Instruct-2507 and the frozen Phi-4-mini-instruct~\citep{abouelenin2025phi4mini} as a second model family. We use Qwen3-4B-Instruct instead of the base paper's Qwen3.5-4B because the latter's hybrid linear attention is impractically slow on T4 GPUs without custom kernels. Training follows the base paper: LoRA rank 16 and $\alpha=32$ on all linear layers, AdamW with learning rate $10^{-4}$, one epoch, batch size 8 and anchor weight 1. Fine-tuning, abstention and continual-learning experiments on the Qwen backbones use three seeds (20260922, 13 and 42); the stream, perturbation, post-adaptation and base-recipe ablation analyses use the first seed, and the frozen models are deterministic. We report means and standard deviations over seeds and paired bootstrap intervals for differences in the area under the risk--coverage curve (AURC). Evaluation items are shared across seeds, and all continual runs start from the same B2 adapter, so we do not pool seeds as independent pairs. Instead, for continual learning we report the per-seed paired difference in accuracy, the number of seeds in which a method is better, and a two-sided exact sign test over unique items on correctness averaged over the three seeds. Abstention is evaluated by coverage, realized selective risk and the number of (environment, seed) tests in which the realized risk exceeds $\alpha$ (\emph{violations}); because these tests share items, we treat the counts as descriptive and also report them per seed. Each of the three parts of the study runs in a single Kaggle session with two T4 GPUs in under 12 hours (8.1, 9.4 and 11.1~h). \Cref{app:impl} lists all implementation details and hyperparameters.

\section{Results}
\label{sec:results}

\subsection{Reproduction of the decision model}
\label{sec:repro}
\Cref{tab:repro} and \cref{fig:repro} compare our implementation with the base paper. Training-free Qwen3-0.6B reaches a six-task external macro accuracy of 57.10 against 57.85 reported, with per-task differences within 2.2 points except SciQ ($-3.5$). Fine-tuning reaches $59.61\pm0.23$ over three seeds, above the reported 58.30, while Banking77 stays at the training-free level (20.4\%) because Banking77 is one of our test shifts and is therefore excluded from our training mixture. Fine-tuning halves the external calibration error (ECE 0.246 to 0.120), and text-mode decoding agrees with the scored decision on all audited items.

\begin{table}[!htbp]\centering\small
\caption{Reproduction of \jev{} with Qwen3-0.6B on the base paper's external benchmarks and sample sizes (accuracy, \%). ECE averages the six macro tasks.}\label{tab:repro}
\resizebox{\textwidth}{!}{%
\begin{tabular}{lrrrrrrrrr}\toprule
System & BoolQ & MMLU & MMLU-Pro & ARC-C & WinoGr. & SciQ & Macro-6 & Banking77 & ECE \\\midrule
Base paper, training-free & 72.60 & 46.75 & 22.25 & 54.17 & 53.60 & 97.75 & 57.85 & 22.21 & -- \\
\textbf{This study, training-free (B1)} & 73.80 & 45.17 & 24.00 & 54.00 & 51.40 & 94.25 & 57.10 & 20.48 & 0.246 \\
Base paper, LoRA $\lambda$=1 & 72.40 & 46.00 & 20.80 & 57.80 & 55.00 & 97.80 & 58.30 & 54.90 & -- \\
Base paper, LoRA $\lambda$=0.01 & 72.80 & 47.00 & 21.50 & 57.20 & 55.20 & 97.50 & 58.53 & 59.00 & -- \\
Base paper, full FT $\lambda$=1 & 74.40 & 47.83 & 20.50 & 60.33 & 53.60 & 97.75 & 59.07 & 60.49 & -- \\
\textbf{This study, LoRA $\lambda$=1 (B2, 3 seeds)} & 77.87 & 46.19 & 23.71 & 61.83 & 53.00 & 95.08 & 59.61 $\pm$ 0.23 & 20.41 & 0.120 \\
\bottomrule\end{tabular}}
\end{table}

\begin{figure}[!htbp]
  \centering
  \includegraphics[width=\textwidth]{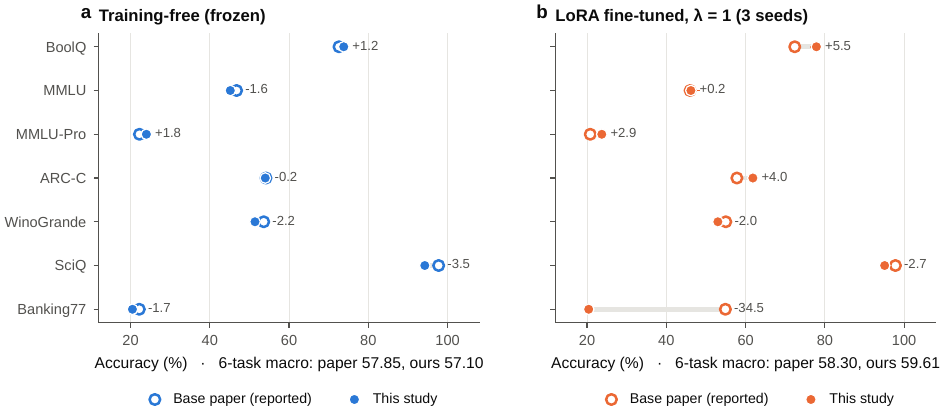}
  \caption{Per-task reproduction of the base paper on Qwen3-0.6B. Hollow markers: reported; filled: this study (b: mean and range over 3 seeds). The Banking77 gap in (b) reflects intent data in the base paper's training mixture, which our ID mixture excludes by design.}
  \label{fig:repro}
\end{figure}

\subsection{RQ1: decision models under shift}
\label{sec:rq1}
Domain shift to intent sets with many options is the most damaging shift (\cref{tab:rq1,tab:main}, \cref{fig:calib}). Frozen Qwen3-0.6B falls from 73.3\% ID accuracy to 18.4\% on Banking77 and 27.6\% on Twitter-Fin while remaining confident (ECE 0.47 and 0.58), and Qwen3-4B falls from 91.1\% to 55.4\% and 60.2\%. Class-prior shift does not reduce accuracy at either scale (AG News is easier than the ID average), and emerging classes cost 4--5 points because ``none of the above'' is often chosen correctly. Candidate changes that leave the task unchanged still hurt the small model: with an added near-duplicate distractor, Qwen3-0.6B reaches 59.3\% against 73.3\% on ID data, while Qwen3-4B reaches 89.7\% against 91.1\%. Fine-tuning with KL anchors and temperature scaling (B3) reduces the shift-averaged ECE from 0.289 to 0.087 for Qwen3-0.6B and from 0.182 to 0.078 for Qwen3-4B, and raises the 0.6B model's shift accuracy from 57.5\% to 64.2\%. These gains come from the base recipe and temperature scaling; the \sj{} reliability layer does not change predictions or calibration and is evaluated separately below. Fine-tuning alone (B2, shift ECE 0.126 on both backbones) accounts for most of the calibration gain on Qwen3-0.6B and about half of it on Qwen3-4B. For Qwen3-4B fine-tuning leaves shift accuracy unchanged (80.4\% vs.\ 80.1\%), consistent with the base paper's observation that gains shrink for stronger backbones. Even after fine-tuning, ECE remains higher under domain shift (0.08--0.17) than on ID data (0.02--0.04).

\begin{table}[!htbp]\centering\small
\caption{RQ1. Accuracy (\%) and ECE of the frozen model (B1, raw) and the fine-tuned, temperature-scaled model (B3; mean $\pm$ s.d.\ over 3 seeds) per environment.}\label{tab:rq1}
\resizebox{\textwidth}{!}{%
\begin{tabular}{llrrrrrrrrr}\toprule
 & & & \multicolumn{4}{c}{Qwen3-0.6B} & \multicolumn{4}{c}{Qwen3-4B-Instruct} \\\cmidrule(lr){4-7}\cmidrule(lr){8-11}
Environment & Shift & $n$ & Acc B1 & ECE B1 & Acc B3 & ECE B3 & Acc B1 & ECE B1 & Acc B3 & ECE B3 \\\midrule
In-distribution & --- & 1200 & 73.3 & 0.178 & 78.5 $\pm$ 1.1 & 0.018 $\pm$ 0.004 & 91.1 & 0.084 & 91.4 $\pm$ 0.3 & 0.038 $\pm$ 0.003 \\
Banking77 & Domain & 500 & 18.4 & 0.472 & 19.2 $\pm$ 0.9 & 0.108 $\pm$ 0.011 & 55.4 & 0.400 & 56.3 $\pm$ 0.2 & 0.128 $\pm$ 0.034 \\
Twitter-Fin & Domain & 500 & 27.6 & 0.582 & 38.2 $\pm$ 0.7 & 0.114 $\pm$ 0.012 & 60.2 & 0.375 & 59.5 $\pm$ 0.8 & 0.167 $\pm$ 0.009 \\
MMLU & Domain & 500 & 43.4 & 0.346 & 46.5 $\pm$ 1.1 & 0.095 $\pm$ 0.017 & 71.0 & 0.260 & 71.3 $\pm$ 0.3 & 0.077 $\pm$ 0.008 \\
AG prior $\rho$=10 & Class prior & 300 & 81.7 & 0.136 & 90.1 $\pm$ 1.0 & 0.044 $\pm$ 0.015 & 92.3 & 0.078 & 91.3 $\pm$ 0.9 & 0.044 $\pm$ 0.006 \\
AG prior $\rho$=100 & Class prior & 300 & 78.0 & 0.187 & 89.0 $\pm$ 3.6 & 0.048 $\pm$ 0.007 & 86.3 & 0.134 & 87.4 $\pm$ 0.5 & 0.049 $\pm$ 0.006 \\
CLINC-OOS & Emerging classes & 500 & 68.0 & 0.189 & 71.5 $\pm$ 3.2 & 0.153 $\pm$ 0.020 & 86.8 & 0.119 & 86.2 $\pm$ 0.8 & 0.066 $\pm$ 0.017 \\
Option rotation & Candidate set & 204 & 70.6 & 0.199 & 76.8 $\pm$ 0.6 & 0.070 $\pm$ 0.011 & 92.6 & 0.076 & 89.9 $\pm$ 0.3 & 0.049 $\pm$ 0.007 \\
Label paraphrase & Candidate set & 153 & 70.6 & 0.209 & 77.6 $\pm$ 0.8 & 0.074 $\pm$ 0.005 & 89.5 & 0.102 & 89.8 $\pm$ 0.4 & 0.047 $\pm$ 0.003 \\
Near-duplicate distractor & Candidate set & 204 & 59.3 & 0.278 & 69.3 $\pm$ 3.0 & 0.077 $\pm$ 0.014 & 89.7 & 0.095 & 89.1 $\pm$ 0.7 & 0.075 $\pm$ 0.005 \\
SciQ & External & 300 & 92.7 & 0.059 & 93.8 $\pm$ 0.8 & 0.065 $\pm$ 0.013 & 98.0 & 0.020 & 98.2 $\pm$ 0.2 & 0.034 $\pm$ 0.003 \\
\bottomrule\end{tabular}}
\end{table}

\begin{figure}[!htbp]
  \centering
  \includegraphics[width=\textwidth]{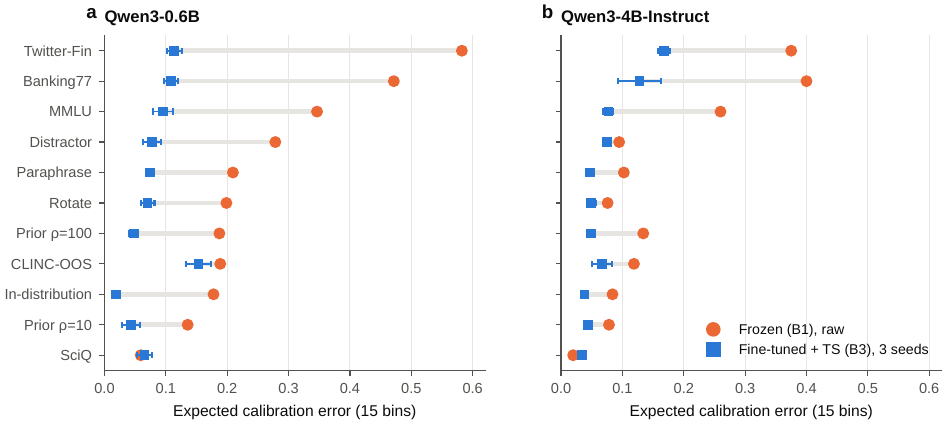}
  \caption{Expected calibration error per environment for the frozen model (B1) and the fine-tuned, temperature-scaled model (B3; mean and s.d.\ over 3 seeds): (a)~Qwen3-0.6B, (b)~Qwen3-4B-Instruct.}
  \label{fig:calib}
\end{figure}

\begin{table}[!htbp]\centering\small
\caption{Systems B1--B6. Shift metrics average the nine shifted environments; abstention at $\alpha=0.10$, $\delta=0.1$. The calibration gains of B2 and B3 come from fine-tuning with KL anchors and temperature scaling; the reliability layer (B4, B5) does not change predictions or calibration. B6 is evaluated on the continual evaluation sets: ID accuracy after the last stage (3 seeds) and, marked $^\dagger$, coverage and realized risk on the ID regime with its recertified threshold (seed 1; \cref{tab:post}).}\label{tab:main}
\resizebox{\textwidth}{!}{%
\begin{tabular}{llrrrrrrrr}\toprule
Backbone & System & Seeds & ID acc (\%) & ID ECE & Shift acc (\%) & Shift ECE & Coverage@.10 (\%) & Risk@.10 (\%) & Violations@.10 \\\midrule
Qwen3-0.6B & B1 frozen & 1 & 73.3 & 0.178 & 57.5 & 0.289 & -- & -- & -- \\
 & B2 fine-tuned + KL anchors & 3 & 78.5 $\pm$ 1.1 & 0.061 $\pm$ 0.004 & 64.2 $\pm$ 0.6 & 0.126 $\pm$ 0.007 & -- & -- & -- \\
 & B3 = B2 + temperature scaling & 3 & 78.5 $\pm$ 1.1 & 0.018 $\pm$ 0.004 & 64.2 $\pm$ 0.6 & 0.087 $\pm$ 0.003 & -- & -- & -- \\
 & B4 = B3 + uncertainty abstention & 3 & \multicolumn{4}{c}{\emph{as B3}} & 37.9 $\pm$ 4.9 & 6.4 $\pm$ 1.9 & 5/27 \\
 & B5 = B3 + \sj{} monitor & 3 & \multicolumn{4}{c}{\emph{as B3}} & 33.3 $\pm$ 6.6 & 5.2 $\pm$ 2.8 & 4/27 \\
 & B6 = B5 after continual adaptation & 3 / 1 & 77.6 $\pm$ 0.9 & -- & \multicolumn{2}{c}{\emph{see \cref{tab:cl,tab:cl4b}}} & 64.5$^\dagger$ & 11.6$^\dagger$ & -- \\
Qwen3-4B & B1 frozen & 1 & 91.1 & 0.084 & 80.4 & 0.182 & -- & -- & -- \\
 & B2 fine-tuned + KL anchors & 3 & 91.4 $\pm$ 0.3 & 0.033 $\pm$ 0.022 & 80.1 $\pm$ 0.2 & 0.126 $\pm$ 0.024 & -- & -- & -- \\
 & B3 = B2 + temperature scaling & 3 & 91.4 $\pm$ 0.3 & 0.038 $\pm$ 0.003 & 80.1 $\pm$ 0.2 & 0.078 $\pm$ 0.005 & -- & -- & -- \\
 & B4 = B3 + uncertainty abstention & 3 & \multicolumn{4}{c}{\emph{as B3}} & 55.8 $\pm$ 2.0 & 4.8 $\pm$ 0.6 & 3/27 \\
 & B5 = B3 + \sj{} monitor & 3 & \multicolumn{4}{c}{\emph{as B3}} & 60.1 $\pm$ 7.4 & 6.2 $\pm$ 2.8 & 6/27 \\
 & B6 = B5 after continual adaptation & 3 / 1 & 89.0 $\pm$ 0.9 & -- & \multicolumn{2}{c}{\emph{see \cref{tab:cl,tab:cl4b}}} & 85.7$^\dagger$ & 4.7$^\dagger$ & -- \\
\bottomrule\end{tabular}}
\end{table}

\subsection{RQ2: certified abstention under shift}
\label{sec:rq2}
\Cref{tab:abst_small,tab:abst_large} report every abstention rule on both backbones, \cref{tab:routing} isolates the threshold rule from the risk score, and \cref{fig:riskcontrol} summarizes the comparison. We focus on $\alpha=0.10$. At $\alpha=0.05$, every rule whose threshold is certified on ID data alone, including B5, certifies no threshold in most runs and abstains on all shifted inputs (coverage 0.0\% on Qwen3-0.6B and 1.4\% on Qwen3-4B); B5 also abstains on all ID inputs in five of six runs. Its zero violations at $\alpha=0.05$ are therefore vacuous. With 60--350 answered calibration points and $\delta=0.1$, the Clopper--Pearson bound falls below 0.05 only if the observed error rate is below about 2--3\%; in four of the six Qwen runs no candidate threshold met this on the ID calibration half, in one the strictest candidate failed and ended the fixed sequence although later candidates would have passed, and in one a threshold answering 17\% of ID inputs was certified (\cref{sec:limits}). Pooled certification, with 700 instead of 400 test-half points, is less affected.

The risk score matters even when the certification procedure is the same. With the same pooled LTT threshold, the MSP score violates $\alpha=0.10$ in 15 of 27 tests on Qwen3-0.6B (five in every seed) at a mean realized risk of 16.0\%, whereas the learned uncertainty score of B4 violates it in 5 tests at 6.4\%; on Qwen3-4B the counts are 5 and 3. A plausible reason is that one threshold on the maximum probability does not transfer across tasks with very different numbers of options, whereas the learned score also uses the margin and the number of options. Split-conformal LAC sets answer the most inputs (53.6\% and 73.8\%), but at realized risks of 24.5\% and 12.1\%, and violate 14 and 9 tests, which is expected because they control marginal coverage rather than selective risk. Adding the format and consistency signals to the learned score does not reduce violations under a pooled threshold (8 and 3 tests at $\alpha=0.10$).

Drift-based routing has little effect (\cref{tab:routing}). Routing changes a threshold only when the proxy-shift threshold is stricter than the ID threshold, which never happened at $\alpha=0.05$ and happened in one to three of the three runs at $\alpha=0.10$ and 0.20. B5 therefore has the same violation counts as the same score with an ID-only threshold on Qwen3-4B, and differs from it only at $\alpha=0.20$ on Qwen3-0.6B (12 against 14 violations). The calibration data matter more than routing, although not uniformly: certifying the uncertainty score on ID data instead of pooled data changes the violations at $\alpha=0.10$ from 5 to 4 on Qwen3-0.6B but from 3 to 6 on Qwen3-4B. Applying the stricter of the ID and proxy thresholds to every input gives the fewest violations on both backbones (3 and 2 of 27 at $\alpha=0.10$; 7 and 4 summed over the three levels for the \sj{} score), at a coverage cost of 7.4 points on Qwen3-4B at $\alpha=0.10$ (52.7\% against 60.1\%) and 10.4 points on Qwen3-0.6B at $\alpha=0.20$. Because this rule emerged from the post-hoc analysis in \cref{tab:routing}, we report it as a finding to be confirmed rather than as part of the method. At $\alpha=0.10$, B5 answers 33.3\% of shifted inputs at 5.2\% realized risk on Qwen3-0.6B and 60.1\% at 6.2\% on Qwen3-4B, with 4 and 6 violations, comparable to B4 (5 and 3). The risk--coverage curves (\cref{fig:riskcov}) explain why no rule can answer many Banking77 or Twitter-Fin inputs: even the 5\% most confident decisions of Qwen3-0.6B on these sets are wrong 30--60\% of the time.

Placing the drift distances inside the risk model, as in our earlier design, increases summed violations relative to B5 from 16 to 20 on Qwen3-0.6B and from 9 to 19 on Qwen3-4B, worsens AURC on Qwen3-0.6B (0.2332 against 0.2265) and makes coverage unstable across seeds ($27.5\pm24.0\%$). A risk model using drift alone ranks errors barely above chance (error AUROC 0.54 and 0.50). The drift embedding is nevertheless useful for routing in the sense that none of the candidate-perturbed items is routed to $\tshift$ (\cref{tab:pert}), because their state text is unchanged, and it drives the stream detectors of \cref{sec:streams}.

The different risk scores rank errors almost identically. Shift AURC lies between 0.2244 and 0.2307 on Qwen3-0.6B and between 0.0902 and 0.0958 on Qwen3-4B for the MSP, uncertainty and \sj{} scores and their ablations without format or consistency signals, error AUROC is 0.75--0.76 and 0.80--0.81, and paired bootstrap tests do not separate B4 and B5 on most environments. Removing rotation consistency raises AURC to 0.2307, while removing the format signal lowers it to 0.2244 and reduces violations to 11, so the format signal does not help. The energy score is unsuitable for trie scores: because each option's score is already a normalized log-probability, the energy measures total identifier mass rather than confidence among options, its error AUROC is inverted (0.25 and 0.20) and its certification abstains on everything.

When the correct option is removed, every answer is wrong. B5 abstains on 40.0\% of such items on Qwen3-0.6B and 41.3\% on Qwen3-4B, against 35.8\% and 17.2\% for B4, but the Qwen3-4B figure varies strongly between seeds (72\%, 14\% and 38\%). Under the three task-preserving perturbations B5 keeps realized risk between 2.6\% and 8.8\% (seed 1, \cref{tab:pert}).

\begin{table}[!htbp]\centering\small
\caption{Abstention on B2 under shift (Qwen3-0.6B; 3 seeds $\times$ 9 shifted environments = 27 tests per $\alpha$). ``Pooled'' thresholds are certified by LTT on ID and proxy-shift calibration data together, ``ID-only'' on ID data alone; B5 uses the ID threshold and routes unknown inputs to $\tau_{\mathrm{shift}}$. Coverage and realized risk average the nine environments (mean $\pm$ s.d.\ over seeds); a coverage of 0 means that no threshold could be certified and the rule abstains on everything. Violations count (environment, seed) tests whose realized risk exceeds $\alpha$; because the test items are shared across seeds, the counts are descriptive. MSP and LAC rank inputs by the same score and therefore have the same AURC.}\label{tab:abst_small}
\resizebox{\textwidth}{!}{%
\begin{tabular}{lrrrrrrr}\toprule
Risk score / threshold rule & Shift AURC $\downarrow$ & Cov@.05 (\%) & Cov@.10 (\%) & Risk@.10 (\%) & Violations $\alpha$=.05/.10/.20 & Viol.@.10 per seed & Gold removed: abstain (\%) \\\midrule
MSP score, pooled LTT $\tau$ & 0.2287 $\pm$ 0.0034 & 28.8 $\pm$ 12.5 & 43.4 $\pm$ 2.3 & 16.0 $\pm$ 1.2 & 12 / 15 / 18 & 5 / 5 / 5 & 35.5 $\pm$ 1.5 \\
Energy score, pooled LTT $\tau$ & 0.4852 $\pm$ 0.0139 & 0.0 & 0.0 & 0.0 & 0 / 0 / 0 & 0 / 0 / 0 & 100.0 \\
Split-conformal LAC (not risk-certified) & 0.2287 $\pm$ 0.0034 & 36.3 $\pm$ 2.0 & 53.6 $\pm$ 0.9 & 24.5 $\pm$ 0.4 & 17 / 14 / 15 & 5 / 5 / 4 & 35.6 $\pm$ 0.6 \\
\sj{} risk score, pooled LTT $\tau$ & 0.2265 $\pm$ 0.0036 & 18.0 $\pm$ 16.7 & 39.1 $\pm$ 0.9 & 7.5 $\pm$ 1.0 & 4 / 8 / 10 & 2 / 3 / 3 & 35.8 $\pm$ 3.0 \\
B4: uncertainty risk, pooled LTT $\tau$ & 0.2280 $\pm$ 0.0027 & 20.2 $\pm$ 17.6 & 37.9 $\pm$ 4.9 & 6.4 $\pm$ 1.9 & 4 / 5 / 13 & 2 / 2 / 1 & 35.8 $\pm$ 2.9 \\
Uncertainty risk, ID-only LTT $\tau$ & 0.2280 $\pm$ 0.0027 & 0.0 & 35.4 $\pm$ 5.2 & 5.3 $\pm$ 1.7 & 0 / 4 / 8 & 2 / 1 / 1 & 37.9 $\pm$ 3.6 \\
Drift features inside the risk score & 0.2332 $\pm$ 0.0036 & 9.3 $\pm$ 16.1 & 27.5 $\pm$ 24.0 & 5.0 $\pm$ 4.6 & 2 / 4 / 14 & 1 / 0 / 3 & 55.1 $\pm$ 39.3 \\
B5 $-$ format signal & 0.2244 $\pm$ 0.0007 & 0.0 & 30.3 $\pm$ 3.5 & 3.5 $\pm$ 1.1 & 0 / 1 / 10 & 0 / 0 / 1 & 43.1 $\pm$ 2.5 \\
B5 $-$ rotation consistency & 0.2307 $\pm$ 0.0043 & 0.0 & 34.0 $\pm$ 7.8 & 6.6 $\pm$ 3.1 & 0 / 5 / 9 & 3 / 0 / 2 & 38.6 $\pm$ 5.8 \\
Drift features only & 0.3481 $\pm$ 0.0127 & 0.0 & 0.0 & 0.0 & 0 / 0 / 3 & 0 / 0 / 0 & 100.0 \\
\textbf{B5: \sj{} risk, ID LTT $\tau$ + routing} & 0.2265 $\pm$ 0.0036 & 0.0 & 33.3 $\pm$ 6.6 & 5.2 $\pm$ 2.8 & 0 / 4 / 12 & 1 / 0 / 3 & 40.0 $\pm$ 6.9 \\
\bottomrule\end{tabular}}
\end{table}

\begin{table}[!htbp]\centering\small
\caption{Abstention on B2 under shift (Qwen3-4B-Instruct-2507; 3 seeds $\times$ 9 shifted environments = 27 tests per $\alpha$). ``Pooled'' thresholds are certified by LTT on ID and proxy-shift calibration data together, ``ID-only'' on ID data alone; B5 uses the ID threshold and routes unknown inputs to $\tau_{\mathrm{shift}}$. Coverage and realized risk average the nine environments (mean $\pm$ s.d.\ over seeds); a coverage of 0 means that no threshold could be certified and the rule abstains on everything. Violations count (environment, seed) tests whose realized risk exceeds $\alpha$; because the test items are shared across seeds, the counts are descriptive. MSP and LAC rank inputs by the same score and therefore have the same AURC. The 4B model has no consistency signal, so B5 without the format signal reduces to the uncertainty risk with routing and is omitted.}\label{tab:abst_large}
\resizebox{\textwidth}{!}{%
\begin{tabular}{lrrrrrrr}\toprule
Risk score / threshold rule & Shift AURC $\downarrow$ & Cov@.05 (\%) & Cov@.10 (\%) & Risk@.10 (\%) & Violations $\alpha$=.05/.10/.20 & Viol.@.10 per seed & Gold removed: abstain (\%) \\\midrule
MSP score, pooled LTT $\tau$ & 0.0902 $\pm$ 0.0021 & 37.3 $\pm$ 5.4 & 55.5 $\pm$ 6.8 & 7.7 $\pm$ 0.6 & 6 / 5 / 9 & 2 / 1 / 2 & 34.6 $\pm$ 2.7 \\
Energy score, pooled LTT $\tau$ & 0.3417 $\pm$ 0.0057 & 0.0 & 0.0 & 0.0 & 0 / 0 / 9 & 0 / 0 / 0 & 100.0 \\
Split-conformal LAC (not risk-certified) & 0.0902 $\pm$ 0.0021 & 39.3 $\pm$ 4.0 & 73.8 $\pm$ 4.6 & 12.1 $\pm$ 0.7 & 13 / 9 / 7 & 3 / 3 / 3 & 27.0 $\pm$ 3.4 \\
\sj{} risk score, pooled LTT $\tau$ & 0.0948 $\pm$ 0.0063 & 50.1 $\pm$ 1.5 & 55.9 $\pm$ 1.9 & 4.7 $\pm$ 0.4 & 6 / 3 / 3 & 1 / 1 / 1 & 48.4 $\pm$ 33.3 \\
B4: uncertainty risk, pooled LTT $\tau$ & 0.0958 $\pm$ 0.0047 & 50.5 $\pm$ 0.7 & 55.8 $\pm$ 2.0 & 4.8 $\pm$ 0.6 & 7 / 3 / 4 & 1 / 1 / 1 & 17.2 $\pm$ 4.7 \\
Uncertainty risk, ID-only LTT $\tau$ & 0.0958 $\pm$ 0.0047 & 1.4 $\pm$ 2.4 & 61.2 $\pm$ 8.1 & 6.6 $\pm$ 2.8 & 1 / 6 / 3 & 1 / 0 / 5 & 10.5 $\pm$ 11.1 \\
Drift features inside the risk score & 0.0959 $\pm$ 0.0049 & 34.1 $\pm$ 29.8 & 63.5 $\pm$ 12.0 & 7.7 $\pm$ 4.5 & 5 / 9 / 5 & 1 / 2 / 6 & 44.6 $\pm$ 37.9 \\
Drift features only & 0.1989 $\pm$ 0.0034 & 0.0 & 0.0 & 0.0 & 0 / 0 / 3 & 0 / 0 / 0 & 100.0 \\
\textbf{B5: \sj{} risk, ID LTT $\tau$ + routing} & 0.0948 $\pm$ 0.0063 & 1.4 $\pm$ 2.5 & 60.1 $\pm$ 7.4 & 6.2 $\pm$ 2.8 & 0 / 6 / 3 & 1 / 0 / 5 & 41.3 $\pm$ 29.3 \\
\bottomrule\end{tabular}}
\end{table}

\begin{table}[!htbp]\centering\small
\caption{Isolating the threshold rule from the risk score (both Qwen backbones, 3 seeds, 27 tests per $\alpha$). All thresholds are re-certified offline from the stored calibration splits of each run. Routing changes a threshold only when $\tau_{\mathrm{proxy}}<\tau_{\mathrm{ID}}$, which never happened at $\alpha=0.05$; ``$\tau_{\mathrm{shift}}$ for all inputs'' applies the stricter of the ID and proxy thresholds everywhere.}\label{tab:routing}
\resizebox{\textwidth}{!}{%
\begin{tabular}{lllrrrrr}\toprule
Model & Risk score & Threshold rule & Shift cov. .05/.10/.20 (\%) & Risk@.10 (\%) & ID cov.@.10 (\%) & Violations .05/.10/.20 & Sum \\\midrule
0.6B & uncertainty & pooled (ID + proxy) & 20.2 / 37.9 / 66.9 & 6.4 & 58.8 & 4 / 5 / 13 & 22 \\
 &  & ID only & 0.0 / 35.4 / 60.7 & 5.3 & 54.6 & 0 / 4 / 8 & 12 \\
 &  & ID + routing to $\tau_{\mathrm{shift}}$ (B5 rule) & 0.0 / 35.4 / 60.7 & 5.3 & 54.6 & 0 / 4 / 8 & 12 \\
 &  & $\tau_{\mathrm{shift}}$ for all inputs & 0.0 / 35.0 / 60.3 & 5.2 & 53.9 & 0 / 3 / 8 & 11 \\
 & \sj{} & pooled (ID + proxy) & 18.0 / 39.1 / 63.3 & 7.5 & 60.8 & 4 / 8 / 10 & 22 \\
 &  & ID only & 0.0 / 33.3 / 68.0 & 5.5 & 51.1 & 0 / 4 / 14 & 18 \\
 &  & ID + routing to $\tau_{\mathrm{shift}}$ (B5 rule) & 0.0 / 33.3 / 67.2 & 5.2 & 51.1 & 0 / 4 / 12 & 16 \\
 &  & $\tau_{\mathrm{shift}}$ for all inputs & 0.0 / 32.8 / 56.8 & 5.8 & 50.1 & 0 / 3 / 4 & 7 \\
4B & uncertainty & pooled (ID + proxy) & 50.5 / 55.8 / 69.0 & 4.8 & 89.1 & 7 / 3 / 4 & 14 \\
 &  & ID only & 1.4 / 61.2 / 67.3 & 6.6 & 93.9 & 1 / 6 / 3 & 10 \\
 &  & ID + routing to $\tau_{\mathrm{shift}}$ (B5 rule) & 1.4 / 60.9 / 66.9 & 6.7 & 93.9 & 1 / 6 / 3 & 10 \\
 &  & $\tau_{\mathrm{shift}}$ for all inputs & 1.4 / 52.0 / 60.6 & 3.6 & 84.3 & 1 / 2 / 2 & 5 \\
 & \sj{} & pooled (ID + proxy) & 50.1 / 55.9 / 68.4 & 4.7 & 89.2 & 6 / 3 / 3 & 12 \\
 &  & ID only & 1.4 / 60.4 / 67.2 & 6.2 & 92.9 & 0 / 6 / 3 & 9 \\
 &  & ID + routing to $\tau_{\mathrm{shift}}$ (B5 rule) & 1.4 / 60.1 / 66.9 & 6.2 & 92.9 & 0 / 6 / 3 & 9 \\
 &  & $\tau_{\mathrm{shift}}$ for all inputs & 1.4 / 52.7 / 60.7 & 3.8 & 85.3 & 0 / 2 / 2 & 4 \\
\bottomrule\end{tabular}}
\end{table}

\begin{figure}[!htbp]
  \centering
  \includegraphics[width=\textwidth]{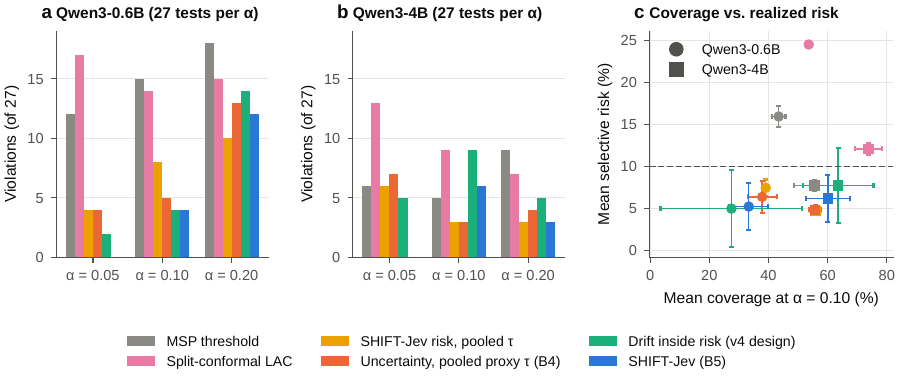}
  \caption{(a, b)~Selective-risk violations out of 27 (environment, seed) tests at each target risk level. At $\alpha=0.05$, B5 abstains on (almost) all inputs (coverage 0.0\% and 1.4\%), so its zero count there is vacuous; see \cref{tab:abst_small,tab:abst_large}. (c)~Mean coverage against mean realized risk at $\alpha=0.10$ (error bars: s.d.\ over seeds); the dashed line marks the target.}
  \label{fig:riskcontrol}
\end{figure}

\begin{figure}[!htbp]
  \centering
  \includegraphics[width=\textwidth]{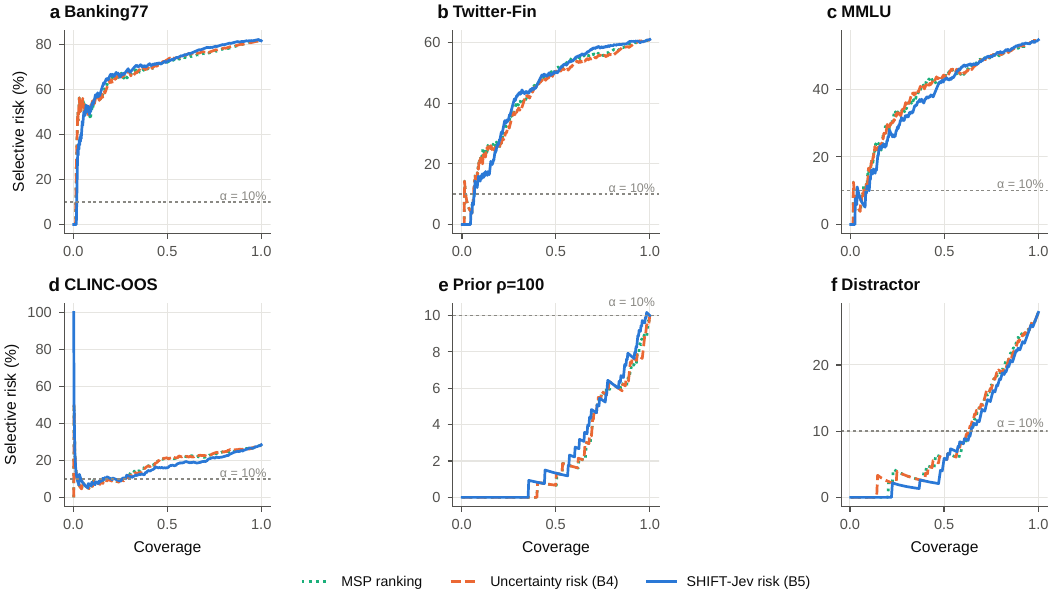}
  \caption{Risk--coverage curves on six shifted environments (Qwen3-0.6B, seed 1) for MSP, the uncertainty risk score (B4) and the \sj{} risk score (B5). The dotted line marks $\alpha=10\%$.}
  \label{fig:riskcov}
\end{figure}

\begin{table}[!htbp]\centering\small
\caption{Candidate-set perturbations (seed 1). For gold-removed items every answer is wrong, so abstention should be high. Rotation agreement is computed only at 0.6B (the consistency signal is not used at 4B).}\label{tab:pert}
\resizebox{\textwidth}{!}{%
\begin{tabular}{lllrrrrrr}\toprule
Model & Perturbation & Type & Acc B1 (\%) & Acc B2 (\%) & Rotation agreement (\%) & Routed to $\tau_{\mathrm{shift}}$ (\%) & B4 abstain / risk (\%) & B5 abstain / risk (\%) \\\midrule
0.6B & Option rotation & harmless & 70.6 & 77.5 & 87.7 & 0 & 38.7 / 9.6 & 49.5 / 8.7 \\
0.6B & Label paraphrase & harmless & 70.6 & 77.1 & 88.9 & 0 & 35.9 / 10.2 & 47.7 / 8.8 \\
0.6B & Near-duplicate distractor & harmless & 59.3 & 72.1 & 77.0 & 0 & 47.1 / 6.5 & 61.8 / 2.6 \\
0.6B & Gold option removed & task-altering & -- & -- & 81.9 & 0 & 32.8 / 100.0 & 42.2 / 100.0 \\
4B & Option rotation & harmless & 92.6 & 90.2 & -- & 0 & 9.3 / 5.9 & 10.8 / 4.4 \\
4B & Label paraphrase & harmless & 89.5 & 90.2 & -- & 0 & 9.2 / 7.2 & 5.9 / 6.2 \\
4B & Near-duplicate distractor & harmless & 89.7 & 89.2 & -- & 0 & 19.6 / 3.7 & 15.7 / 3.5 \\
4B & Gold option removed & task-altering & -- & -- & -- & 0 & 14.2 / 100.0 & 72.1 / 100.0 \\
\bottomrule\end{tabular}}
\end{table}

\subsection{Shift detection in decision streams}
\label{sec:streams}
Domain shift and emerging classes are easy to detect (\cref{tab:detect}, \cref{fig:detect}): CUSUM on the drift distance or on the \sj{} risk score detects 97--100\% of full-severity streams within 2--13 items. Class-prior shift is invisible to all confidence- and embedding-based signals (0--1\% detection), because each input still looks like an ID input and the model remains equally confident. Only the predicted-label $\chi^2$ test detects it, in 95\% (Qwen3-0.6B) and 99\% (Qwen3-4B) of full-severity streams and 80\% and 86\% at 50\% severity, after about 36 items. It is also the best detector of candidate perturbations (70\% and 64\%). On fresh no-shift streams its false-alarm rate is 6\% and 10\%, above the 5\% calibration target on both backbones; CUSUM on drift and on the \sj{} risk (6\% and 7\%, Qwen3-0.6B) and the KS test (7\%, Qwen3-4B) also exceed it, so the simulated calibration of alarm levels is somewhat optimistic. A drift CUSUM and the label test together cover all four shift types. Stream results use the seed-1 adapter.

\begin{table}[!htbp]\centering\small
\caption{Stream-level shift detection: detection rate (\%) at severity 100\% (mean delay in items) $\cdot$ at 50\%, 50 streams per environment and severity. Every alarm threshold is calibrated to $\le$5\% false alarms on 200 simulated no-shift streams; the last column is the observed false-alarm rate on 50 fresh no-shift streams per pool.}\label{tab:detect}
\resizebox{\textwidth}{!}{%
\begin{tabular}{llrrrrr}\toprule
Model & Detector & Domain shift & Emerging classes & Class-prior shift & Candidate perturbation & False alarm (\%) \\\midrule
0.6B & CUSUM $\cdot$ max-probability & 98 (6) $\cdot$ 99 & 98 (11) $\cdot$ 100 & 0 $\cdot$ 0 & 34 (81) $\cdot$ 18 & 2 \\
0.6B & CUSUM $\cdot$ entropy & 33 (19) $\cdot$ 25 & 0 $\cdot$ 0 & 0 $\cdot$ 0 & 9 (88) $\cdot$ 5 & 4 \\
0.6B & CUSUM $\cdot$ $k$NN drift & 97 (12) $\cdot$ 98 & 98 (7) $\cdot$ 98 & 0 $\cdot$ 1 & 1 (90) $\cdot$ 1 & 6 \\
0.6B & CUSUM $\cdot$ \sj{} risk & 99 (6) $\cdot$ 98 & 100 (7) $\cdot$ 98 & 0 $\cdot$ 0 & 36 (57) $\cdot$ 17 & 7 \\
0.6B & KS window $\cdot$ \sj{} risk & 97 (13) $\cdot$ 98 & 100 (16) $\cdot$ 100 & 0 $\cdot$ 0 & 31 (90) $\cdot$ 15 & 3 \\
0.6B & $\chi^2$ $\cdot$ predicted labels (BBSE) & 97 (10) $\cdot$ 96 & 94 (10) $\cdot$ 96 & 95 (37) $\cdot$ 80 & 70 (96) $\cdot$ 27 & 6 \\
4B & CUSUM $\cdot$ max-probability & 97 (22) $\cdot$ 83 & 94 (70) $\cdot$ 34 & 0 $\cdot$ 0 & 8 (88) $\cdot$ 4 & 5 \\
4B & CUSUM $\cdot$ entropy & 6 (103) $\cdot$ 1 & 0 $\cdot$ 0 & 0 $\cdot$ 0 & 5 (110) $\cdot$ 1 & 4 \\
4B & CUSUM $\cdot$ $k$NN drift & 98 (11) $\cdot$ 99 & 98 (6) $\cdot$ 100 & 0 $\cdot$ 0 & 0 $\cdot$ 1 & 5 \\
4B & CUSUM $\cdot$ \sj{} risk & 97 (13) $\cdot$ 91 & 98 (2) $\cdot$ 96 & 0 $\cdot$ 0 & 20 (99) $\cdot$ 7 & 3 \\
4B & KS window $\cdot$ \sj{} risk & 99 (16) $\cdot$ 86 & 96 (10) $\cdot$ 98 & 0 $\cdot$ 0 & 33 (127) $\cdot$ 13 & 7 \\
4B & $\chi^2$ $\cdot$ predicted labels (BBSE) & 94 (10) $\cdot$ 99 & 98 (10) $\cdot$ 100 & 99 (36) $\cdot$ 86 & 64 (89) $\cdot$ 27 & 10 \\
\bottomrule\end{tabular}}
\end{table}

\begin{figure}[!htbp]
  \centering
  \includegraphics[width=\textwidth]{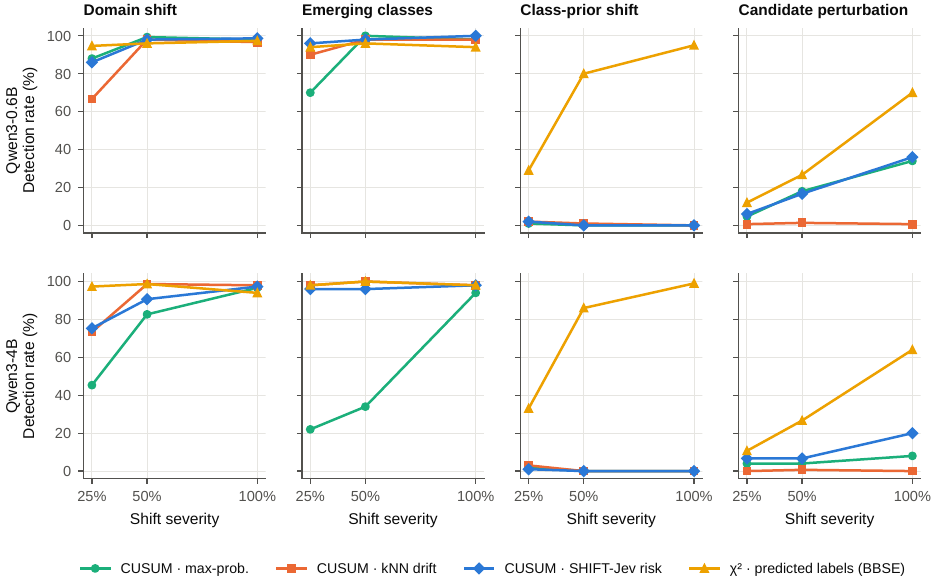}
  \caption{Detection rate against shift severity for four detectors (top: Qwen3-0.6B; bottom: Qwen3-4B; 50 streams per point). Only the predicted-label $\chi^2$ test detects class-prior shift.}
  \label{fig:detect}
\end{figure}

Switching to $\tshift$ after an alarm reduces risk violations in most streams (\cref{tab:stream}). On MMLU, where errors are confident, the share of streams violating $\alpha=0.10$ after the change point falls from 98\% to 44\% on Qwen3-0.6B and from 100\% to 84\% on Qwen3-4B. On AG News with imbalance ratio 100 it falls from 20\% to 0\% on Qwen3-4B, and on the near-duplicate distractor from 10\% to 0\% on Qwen3-0.6B. CLINC150 on Qwen3-0.6B is the exception (20\% to 38\%): the switch lowers coverage from 14\% to 5\% without making the remaining answers safer. For severe domain shifts both systems abstain on almost everything, which is safe but leaves those inputs unautomated.

\begin{table}[!htbp]\centering\small
\caption{Selective risk after a change point (50 streams per environment, $\alpha=0.10$, seed 1). \sj{} switches to $\tau_{\mathrm{shift}}$ after a drift-CUSUM or label-$\chi^2$ alarm.}\label{tab:stream}
\resizebox{\textwidth}{!}{%
\begin{tabular}{lrrrrrr}\toprule
Environment & 0.6B B4 cov / risk & 0.6B \sj{} cov / risk & 0.6B violating streams B4 / \sj{} (\%) & 4B B4 cov / risk & 4B \sj{} cov / risk & 4B violating streams B4 / \sj{} (\%) \\\midrule
Banking77 & 0.0 / 0.0 & 0.0 / 0.0 & 0 / 0 & 0.0 / 0.0 & 0.0 / 0.0 & 0 / 0 \\
Twitter-Fin & 0.0 / 0.0 & 0.0 / 0.0 & 0 / 0 & 0.0 / 0.0 & 0.0 / 0.0 & 0 / 0 \\
MMLU & 18.2 / 27.3 & 9.2 / 9.9 & 98 / 44 & 66.5 / 16.0 & 59.3 / 12.5 & 100 / 84 \\
AG prior $\rho$=10 & 87.3 / 5.4 & 75.8 / 4.1 & 0 / 0 & 93.2 / 6.1 & 88.7 / 4.5 & 4 / 0 \\
AG prior $\rho$=100 & 89.0 / 7.1 & 78.0 / 6.5 & 6 / 4 & 93.5 / 8.6 & 86.3 / 5.5 & 20 / 0 \\
CLINC-OOS & 14.0 / 7.2 & 4.6 / 8.6 & 20 / 38 & 0.0 / 0.0 & 0.0 / 0.0 & 0 / 0 \\
Option rotation & 61.4 / 9.7 & 51.2 / 9.1 & 46 / 40 & 90.6 / 5.6 & 89.0 / 4.4 & 2 / 0 \\
Label paraphrase & 63.4 / 10.3 & 51.4 / 9.0 & 60 / 36 & 90.5 / 7.2 & 90.0 / 6.1 & 8 / 2 \\
Near-duplicate distractor & 53.4 / 6.5 & 38.1 / 2.5 & 10 / 0 & 80.3 / 3.8 & 78.6 / 3.1 & 0 / 0 \\
\bottomrule\end{tabular}}
\end{table}

\subsection{RQ3: continual adaptation}
\label{sec:rq3}
On Qwen3-0.6B \sj{} has the highest final average accuracy (\cref{tab:cl}, \cref{fig:cl}): $70.48\pm0.88\%$, against 68.10\% for LwF, 66.65\% for naive fine-tuning, 66.67\% for experience replay, 66.48\% for EWC and 66.33\% for the base paper's Seq-FT+KL recipe. It is better than every baseline in all three seeds, by $2.38\pm2.52$ points (LwF) to $4.15\pm2.74$ points (Seq-FT+KL). On the 1,600 unique evaluation items, with each item's correctness averaged over the three seeds, the differences are significant (sign test $p=8.5\times10^{-4}$ against LwF and $p\le6.7\times10^{-7}$ against the others). With only three seeds, a seed-level paired test is not significant ($p=0.07$--0.24), so the evidence rests on the consistent direction across seeds and on the item-level differences. \sj{} also has the most negative forgetting ($-2.1\pm1.9$ points, meaning that earlier regimes improve after later stages) and the largest backward transfer ($+3.0$). Most of the gain comes from Banking77 (54.7\% against 41.8--46.2\%); on the other regimes it is within 1.7 points of the best baseline.

The ablations show that most of this gain comes from the training recipe rather than from the gate (last columns of \cref{tab:cl}). Removing replay costs $4.81\pm2.22$ points, replacing the severity-scaled anchor weight with a fixed one $3.27\pm1.50$, random instead of margin-based replay $2.38\pm2.90$, and removing anchors and distillation $1.62\pm2.05$ points. Removing the gate costs only $1.23\pm2.18$ points and helps in two of three seeds (item-level $p=0.024$). The gate's main contribution is to stop updates that do not help: with it, forgetting is $-2.1$ points instead of $+1.6$, and the no-shift control stage is rejected in every seed. The choice of gate matters more than its presence: replacing the expected-gain gate with the earlier gate, which tests whether the model has degraded, costs $4.58\pm1.09$ points in all three seeds because that gate never adapts to CLINC150 (\cref{tab:gate}).

The gate makes the right decision in every seed on Qwen3-0.6B (\cref{tab:gate}, \cref{fig:gate}). It stops the control stage at the probe (held-out gains between $-3.8$ and 0 points) and accepts Banking77, Twitter-Fin and CLINC150 with held-out gains of 11--33 points ($p<0.025$). The earlier gate skips CLINC150 in every seed because the model is not \emph{worse} there, although it can become much better. The gate also responds to the amount of data: with 75 labelled Banking77 items it rejects the update, while with 150 and 300 items it accepts updates that raise Banking77 accuracy from 18.2\% to 39.8\% and 46.8\% at unchanged ID accuracy (\cref{tab:sweeps}).

\begin{table}[!htbp]\centering\small
\caption{Continual adaptation on Qwen3-0.6B. Stream: Banking77 $\rightarrow$ control (no shift) $\rightarrow$ Twitter-Fin $\rightarrow$ CLINC-OOS, starting from B2. Final accuracy (\%) on 400 held-out items per regime; mean $\pm$ s.d.\ over 3 seeds. The last three columns compare \sj{} with each row: the per-seed paired difference in average accuracy (mean $\pm$ s.d.), the number of seeds in which \sj{} is better, and a two-sided sign test on the 1,600 unique evaluation items using each item's correctness averaged over the three seeds (the evaluation items are shared across seeds, so seeds are not pooled as independent pairs).}\label{tab:cl}
\resizebox{\textwidth}{!}{%
\begin{tabular}{lrrrrrrrrrrrr}\toprule
Method (3 seeds) & Avg acc (\%) & BWT (pts) & Forgetting (pts) & ID & Banking77 & Twitter-Fin & CLINC & Stages adapted & Train (min) & \sj{} $-$ row (pts) & Seeds \sj{} better & $p$ (items) \\\midrule
Naive fine-tuning & 66.65 $\pm$ 1.17 & $-$1.1 $\pm$ 1.0 & 1.1 $\pm$ 1.0 & 76.9 $\pm$ 0.9 & 42.7 $\pm$ 2.9 & 60.5 $\pm$ 2.2 & 86.5 $\pm$ 2.0 & 4.0/4 & 9.3 & +3.83 $\pm$ 1.88 & 3/3 & $1.1\times10^{-9}$ \\
EWC & 66.48 $\pm$ 2.10 & 0.4 $\pm$ 1.6 & $-$0.3 $\pm$ 1.7 & 77.2 $\pm$ 0.8 & 42.0 $\pm$ 4.6 & 59.9 $\pm$ 3.5 & 86.8 $\pm$ 1.6 & 4.0/4 & 9.6 & +4.00 $\pm$ 2.88 & 3/3 & $6.3\times10^{-8}$ \\
Seq-FT + KL anchors (base paper) & 66.33 $\pm$ 1.92 & 0.3 $\pm$ 1.0 & $-$0.2 $\pm$ 1.1 & 77.1 $\pm$ 0.9 & 41.8 $\pm$ 4.1 & 59.4 $\pm$ 2.8 & 87.0 $\pm$ 1.3 & 4.0/4 & 9.5 & +4.15 $\pm$ 2.74 & 3/3 & $1.2\times10^{-8}$ \\
Experience replay & 66.67 $\pm$ 1.71 & $-$0.3 $\pm$ 2.7 & 1.4 $\pm$ 2.5 & 76.2 $\pm$ 1.1 & 46.2 $\pm$ 3.2 & 56.5 $\pm$ 5.7 & 87.8 $\pm$ 1.0 & 4.0/4 & 13.4 & +3.81 $\pm$ 2.53 & 3/3 & $6.7\times10^{-7}$ \\
LwF (tree-local) & 68.10 $\pm$ 1.71 & 0.3 $\pm$ 1.3 & $-$0.3 $\pm$ 1.3 & 77.2 $\pm$ 1.3 & 43.6 $\pm$ 2.5 & 63.0 $\pm$ 2.4 & 88.6 $\pm$ 2.7 & 4.0/4 & 9.4 & +2.38 $\pm$ 2.52 & 3/3 & $8.5\times10^{-4}$ \\
\textbf{\sj{} (expected-gain gate)} & 70.48 $\pm$ 0.88 & 3.0 $\pm$ 0.9 & $-$2.1 $\pm$ 1.9 & 77.6 $\pm$ 0.9 & 54.7 $\pm$ 1.9 & 61.3 $\pm$ 1.9 & 88.3 $\pm$ 0.6 & 3.0/4 & 12.4 & -- & -- & -- \\
\midrule\multicolumn{13}{l}{\emph{Ablations of \sj{} (3 seeds)}} \\
\quad $-$ gate \& rollback (always adapt) & 69.25 $\pm$ 1.32 & 0.7 $\pm$ 2.0 & 1.6 $\pm$ 1.2 & 76.4 $\pm$ 0.6 & 54.7 $\pm$ 0.7 & 58.0 $\pm$ 3.5 & 87.9 $\pm$ 0.8 & 4.0/4 & 14.0 & +1.23 $\pm$ 2.18 & 2/3 & 0.024 \\
\quad v4 shift gate instead of gain gate & 65.90 $\pm$ 0.34 & 0.2 $\pm$ 0.5 & 0.3 $\pm$ 0.1 & 78.4 $\pm$ 0.4 & 49.4 $\pm$ 2.5 & 57.6 $\pm$ 2.1 & 78.2 $\pm$ 1.8 & 2.0/4 & 9.3 & +4.58 $\pm$ 1.09 & 3/3 & $1.5\times10^{-9}$ \\
\quad $-$ replay & 65.67 $\pm$ 2.30 & $-$0.6 $\pm$ 0.8 & 0.6 $\pm$ 0.8 & 78.0 $\pm$ 1.6 & 42.8 $\pm$ 3.8 & 54.9 $\pm$ 8.0 & 87.0 $\pm$ 0.4 & 2.7/4 & 10.3 & +4.81 $\pm$ 2.22 & 3/3 & $6.1\times10^{-17}$ \\
\quad $-$ KL anchors \& distillation & 68.85 $\pm$ 1.63 & 0.3 $\pm$ 1.2 & 1.9 $\pm$ 1.6 & 77.2 $\pm$ 0.9 & 51.9 $\pm$ 4.7 & 58.9 $\pm$ 5.7 & 87.3 $\pm$ 1.6 & 3.0/4 & 13.7 & +1.62 $\pm$ 2.05 & 2/3 & $1.7\times10^{-4}$ \\
\quad $-$ severity-scaled $\lambda$ & 67.21 $\pm$ 0.69 & 0.6 $\pm$ 1.4 & 1.0 $\pm$ 1.4 & 77.4 $\pm$ 1.2 & 46.3 $\pm$ 2.0 & 58.1 $\pm$ 1.8 & 87.0 & 3.0/4 & 13.4 & +3.27 $\pm$ 1.50 & 3/3 & $1.1\times10^{-10}$ \\
\quad $-$ risk-stratified replay (random) & 68.10 $\pm$ 2.30 & 0.9 $\pm$ 2.7 & $-$0.4 $\pm$ 2.0 & 77.0 $\pm$ 0.7 & 47.9 $\pm$ 8.1 & 58.5 $\pm$ 3.5 & 89.0 $\pm$ 0.7 & 3.0/4 & 12.9 & +2.38 $\pm$ 2.90 & 2/3 & $6.4\times10^{-5}$ \\
\bottomrule\end{tabular}}
\end{table}

\begin{figure}[!htbp]
  \centering
  \includegraphics[width=\textwidth]{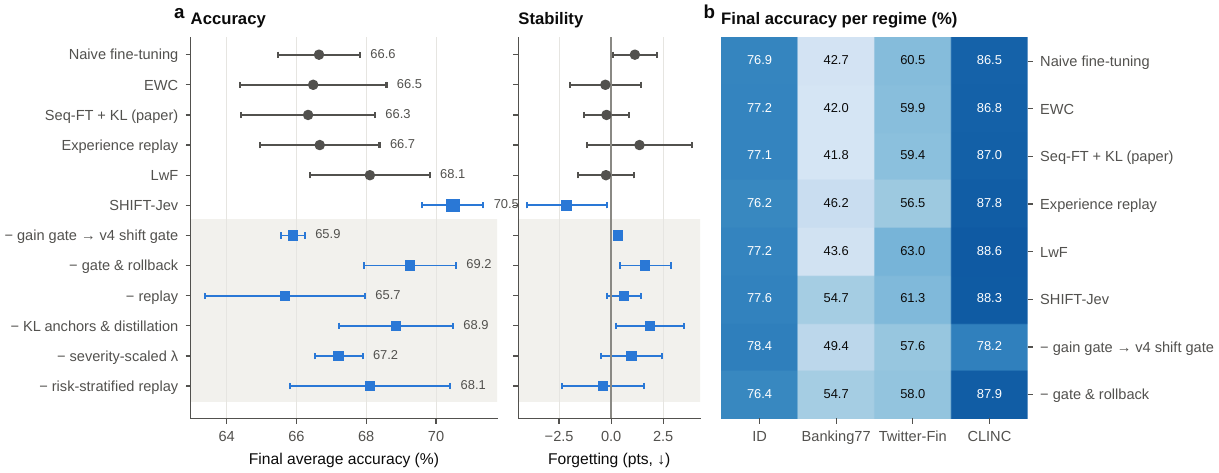}
  \caption{Continual adaptation on Qwen3-0.6B. (a)~Final average accuracy and forgetting (mean $\pm$ s.d., 3 seeds; shaded: \sj{} ablations). (b)~Final accuracy per regime: the gain over the baselines comes from Banking77, and the earlier gate loses CLINC150, the regime it never adapts to.}
  \label{fig:cl}
\end{figure}

\begin{table}[!htbp]\centering\small
\caption{Expected-gain gate decisions of \sj{} (3 seeds per backbone). Gain = held-out accuracy gain (points) of the best candidate on 80 new-regime items, with its one-sided sign-test $p$; an update is kept iff $p<0.10$ and old-regime accuracy drops by $\le$5 points. ``Trigger'' = the v4 shift test's outcome (logged only); the last column shows how often the v4 shift gate adapted.}\label{tab:gate}
\resizebox{\textwidth}{!}{%
\begin{tabular}{llrrrrlr}\toprule
Model & Regime & Accepted & Stopped by probe & Gain (pts) & Sign-test $p$ & v4 trigger & v4 gate adapted \\\midrule
Qwen3-0.6B & Banking77 & 3/3 & 0/3 & 18.8 to 32.5 & $1.3\times10^{-6}$ to $2.0\times10^{-3}$ & drift+perf, perf & 3/3 \\
 & Control (no shift) & 0/3 & 3/3 & $-$3.8 to 0.0 & 0.66 to 0.97 & none & 0/3 \\
 & Twitter-Fin & 3/3 & 0/3 & 11.2 to 15.0 & $2.1\times10^{-3}$ to 0.025 & perf & 3/3 \\
 & CLINC-OOS & 3/3 & 0/3 & 17.5 to 26.2 & $4.8\times10^{-7}$ to $6.6\times10^{-4}$ & none & 0/3 \\
Qwen3-4B & Banking77 & 3/3 & 0/3 & 10.0 to 12.5 & $3.2\times10^{-3}$ to 0.029 & perf & -- \\
 & Control (no shift) & 0/3 & 2/3 & $-$2.5 to 1.2 & 0.5 to 1 & none & -- \\
 & Twitter-Fin & 0/3 & 0/3 & 2.5 to 7.5 & 0.11 to 0.4 & perf & -- \\
 & CLINC-OOS & 3/3 & 0/3 & 7.5 to 10.0 & $3.9\times10^{-3}$ to 0.016 & none & -- \\
\bottomrule\end{tabular}}
\end{table}

\begin{figure}[!htbp]
  \centering
  \includegraphics[width=\textwidth]{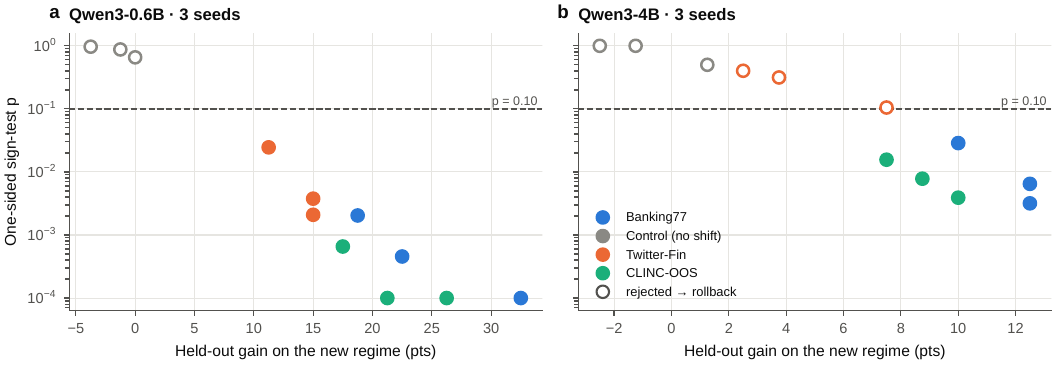}
  \caption{All expected-gain gate decisions: held-out gain of the best candidate against its one-sided sign-test $p$-value (log scale). Filled markers: accepted; open markers: rolled back. On Qwen3-4B the Twitter-Fin gains of 2.5--7.5 points do not reach $p<0.10$ on 80 items.}
  \label{fig:gate}
\end{figure}

On Qwen3-4B the gate is too conservative for small gains (\cref{tab:cl4b}, \cref{fig:cl4b}). \sj{} again rejects the control stage and accepts Banking77 and CLINC150 in all seeds, giving the best Banking77 (68.6\%) and CLINC150 (96.0\%) accuracy, the largest backward transfer ($+1.9$) and the least forgetting ($-1.8$). It rejects the Twitter-Fin update in all three seeds, however, because gains of 2.5--7.5 points on 80 held-out items do not reach $p<0.10$. Baselines that adapted reach 68.7--73.0\% on Twitter-Fin against 65.9\%, so \sj{} ends at $79.86\pm1.10\%$ average accuracy, 0.42--1.36 points below the baselines and below four of them in all three seeds. On the 1,200 unique evaluation items none of these differences is significant ($p=0.055$--0.94). A strong model leaves only small gains, and an 80-item paired test lacks the power to confirm them.

\begin{table}[!htbp]\centering\small
\caption{Continual adaptation on Qwen3-4B-Instruct-2507 (3 seeds; 300 held-out items per regime). The last three columns are computed as in \cref{tab:cl}, with 1,200 unique evaluation items; negative differences mean that \sj{} is worse.}\label{tab:cl4b}
\resizebox{\textwidth}{!}{%
\begin{tabular}{lrrrrrrrrrrrr}\toprule
Method (3 seeds) & Avg acc (\%) & BWT (pts) & Forgetting (pts) & ID & Banking77 & Twitter-Fin & CLINC & Stages adapted & Train (min) & \sj{} $-$ row (pts) & Seeds \sj{} better & $p$ (items) \\\midrule
Naive fine-tuning & 81.22 $\pm$ 0.55 & 0.8 $\pm$ 1.4 & $-$0.4 $\pm$ 1.4 & 89.2 $\pm$ 1.1 & 66.9 $\pm$ 1.8 & 73.0 $\pm$ 1.0 & 95.8 $\pm$ 0.5 & 4.0/4 & 15.0 & $-$1.36 $\pm$ 0.59 & 0/3 & 0.31 \\
EWC & 81.00 $\pm$ 1.01 & 1.0 $\pm$ 1.2 & $-$0.5 $\pm$ 0.9 & 89.3 $\pm$ 1.2 & 67.0 $\pm$ 1.2 & 72.0 $\pm$ 2.0 & 95.7 $\pm$ 0.3 & 4.0/4 & 15.2 & $-$1.14 $\pm$ 0.10 & 0/3 & 0.055 \\
Seq-FT + KL anchors (base paper) & 80.28 $\pm$ 1.12 & 0.7 $\pm$ 1.3 & $-$0.3 $\pm$ 1.0 & 89.8 $\pm$ 1.6 & 64.7 $\pm$ 2.2 & 71.3 $\pm$ 1.5 & 95.3 $\pm$ 0.3 & 4.0/4 & 15.1 & $-$0.42 $\pm$ 0.22 & 0/3 & 0.94 \\
Experience replay & 80.33 $\pm$ 0.76 & $-$0.4 $\pm$ 1.1 & 1.6 $\pm$ 0.6 & 89.9 $\pm$ 0.8 & 67.7 $\pm$ 1.2 & 68.7 $\pm$ 1.8 & 95.1 $\pm$ 1.3 & 4.0/4 & 18.6 & $-$0.47 $\pm$ 1.43 & 1/3 & 0.94 \\
LwF (tree-local) & 81.00 $\pm$ 0.60 & 1.0 $\pm$ 0.7 & $-$0.6 $\pm$ 0.7 & 89.7 $\pm$ 1.2 & 66.0 $\pm$ 0.9 & 72.9 $\pm$ 1.0 & 95.4 $\pm$ 0.8 & 4.0/4 & 15.0 & $-$1.14 $\pm$ 0.50 & 0/3 & 0.14 \\
\textbf{\sj{} (expected-gain gate)} & 79.86 $\pm$ 1.10 & 1.9 $\pm$ 1.0 & $-$1.8 $\pm$ 1.0 & 89.0 $\pm$ 0.9 & 68.6 $\pm$ 1.0 & 65.9 $\pm$ 3.4 & 96.0 $\pm$ 0.3 & 2.0/4 & 17.9 & -- & -- & -- \\
\bottomrule\end{tabular}}
\end{table}

\begin{figure}[!htbp]
  \centering
  \includegraphics[width=\textwidth]{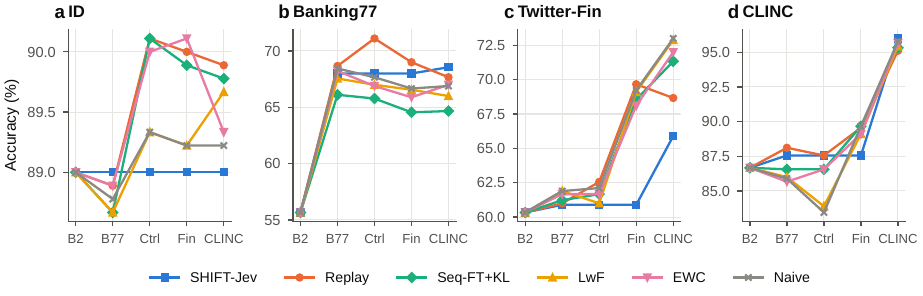}
  \caption{Qwen3-4B-Instruct: accuracy on each regime after every stage (mean of 3 seeds). \sj{} keeps ID and Banking77 stable and is best on CLINC150, but stays at its starting level on Twitter-Fin after rejecting that update.}
  \label{fig:cl4b}
\end{figure}

Recertifying one threshold per adapted regime on 160 held-out items (B6, \cref{tab:post}) restores CLINC150 coverage at low risk (44.5\% at 4.5\% risk on Qwen3-0.6B and 67.0\% at 3.5\% on Qwen3-4B) and keeps the ID regime within $\alpha$ on Qwen3-4B (85.7\% coverage at 4.7\% risk). Banking77 is fully abstained on, Twitter-Fin is answered for only 1.2\% and 7.3\% of inputs at realized risks of 40\% and 27\%, and on Qwen3-0.6B the ID regime is violated (64.5\% coverage at 11.6\% risk). Certification on 160 items is therefore fragile, in line with the finding that coverage degrades faster than accuracy under sequential fine-tuning~\citep{shihab2026continualcal}.

\begin{table}[!htbp]\centering\small
\caption{Reliability after continual adaptation (B6, seed 1): coverage / realized risk (\%) at $\alpha=0.10$ on the continual evaluation sets after the last stage. Each adapted regime receives its own certified threshold.}\label{tab:post}
\resizebox{\textwidth}{!}{%
\begin{tabular}{lllrrrrr}\toprule
Model & Method & Eval set & Acc (\%) & ID-only $\tau$ & B4 pooled $\tau$ & B4 regime $\tau$ & \sj{} (B5) \\\midrule
0.6B & \sj & ID & 76.5 & 53.8 / 6.5 & 62.0 / 8.9 & 59.8 / 11.3 & 64.5 / 11.6 \\
 &  & Banking77 & 56.0 & 0.0 / 0.0 & 0.0 / 0.0 & 0.0 / 0.0 & 0.0 / 0.0 \\
 &  & Twitter-Fin & 60.2 & 0.0 / 0.0 & 0.0 / 0.0 & 1.2 / 40.0 & 1.2 / 40.0 \\
 &  & CLINC-OOS & 89.0 & 0.0 / 0.0 & 0.0 / 0.0 & 43.2 / 4.0 & 44.5 / 4.5 \\
 & Experience replay & ID & 77.0 & 45.2 / 5.0 & 62.7 / 9.6 & 51.5 / 9.7 & 57.5 / 10.0 \\
 &  & Banking77 & 45.0 & 0.0 / 0.0 & 0.0 / 0.0 & 0.0 / 0.0 & 0.0 / 0.0 \\
 &  & Twitter-Fin & 61.8 & 0.0 / 0.0 & 0.0 / 0.0 & 0.0 / 0.0 & 0.0 / 0.0 \\
 &  & CLINC-OOS & 86.8 & 0.0 / 0.0 & 0.0 / 0.0 & 42.5 / 4.1 & 44.5 / 5.1 \\
 & Seq-FT + KL anchors & ID & 76.2 & 54.5 / 7.3 & 58.5 / 8.1 & 50.0 / 5.5 & 46.5 / 5.4 \\
 &  & Banking77 & 43.2 & 0.0 / 0.0 & 0.0 / 0.0 & 0.0 / 0.0 & 0.0 / 0.0 \\
 &  & Twitter-Fin & 62.3 & 0.0 / 0.0 & 0.0 / 0.0 & 0.0 / 0.0 & 0.0 / 0.0 \\
 &  & CLINC-OOS & 87.5 & 0.0 / 0.0 & 0.0 / 0.0 & 0.0 / 0.0 & 0.0 / 0.0 \\
4B & \sj & ID & 89.3 & 91.0 / 5.9 & 90.7 / 5.9 & 91.3 / 5.8 & 85.7 / 4.7 \\
 &  & Banking77 & 68.0 & 0.0 / 0.0 & 0.0 / 0.0 & 0.0 / 0.0 & 0.0 / 0.0 \\
 &  & Twitter-Fin & 69.7 & 0.0 / 0.0 & 0.0 / 0.0 & 4.7 / 35.7 & 7.3 / 27.3 \\
 &  & CLINC-OOS & 96.3 & 0.0 / 0.0 & 0.0 / 0.0 & 68.0 / 3.4 & 67.0 / 3.5 \\
 & Experience replay & ID & 90.0 & 94.0 / 8.2 & 91.3 / 6.9 & 94.0 / 8.5 & 92.3 / 8.3 \\
 &  & Banking77 & 67.0 & 0.0 / 0.0 & 0.0 / 0.0 & 0.0 / 0.0 & 0.0 / 0.0 \\
 &  & Twitter-Fin & 68.0 & 0.0 / 0.0 & 0.0 / 0.0 & 8.7 / 19.2 & 9.3 / 21.4 \\
 &  & CLINC-OOS & 93.7 & 0.0 / 0.0 & 0.0 / 0.0 & 67.7 / 6.9 & 67.7 / 6.9 \\
 & Seq-FT + KL anchors & ID & 91.0 & 96.7 / 7.9 & 86.3 / 6.6 & 95.0 / 7.4 & 93.0 / 7.2 \\
 &  & Banking77 & 65.0 & 0.0 / 0.0 & 0.0 / 0.0 & 0.0 / 0.0 & 0.0 / 0.0 \\
 &  & Twitter-Fin & 73.0 & 0.0 / 0.0 & 0.0 / 0.0 & 5.3 / 12.5 & 7.0 / 23.8 \\
 &  & CLINC-OOS & 95.3 & 0.0 / 0.0 & 0.0 / 0.0 & 65.0 / 4.6 & 63.7 / 4.7 \\
\bottomrule\end{tabular}}
\end{table}

\subsection{Scale and model family}
\label{sec:scale}
Training-free Qwen3-4B-Instruct reaches a six-task external macro accuracy of 75.76, against 78.58 for the base paper's different Qwen3.5-4B checkpoint, and fine-tuning leaves it unchanged (75.71) while halving its ECE (0.220 to 0.105; \cref{tab:ext4b}, \cref{fig:scale}). For the frozen Phi-4-mini (one run, \cref{tab:phi}), the MSP score with a pooled threshold has no violations at $\alpha=0.10$ and four in total. The \sj{} layer has three in total but certifies no threshold at $\alpha=0.05$, has a higher realized risk at $\alpha=0.10$ than the uncertainty score (7.9\% against 5.2\%), a worse AURC (0.160 against 0.129) and lower abstention on removed-answer items (14\% against 25\%). Without fine-tuning and without the consistency signal, the format signal of a different model family adds noise to the risk model. On the frozen Qwen models the \sj{} layer certifies no threshold at $\alpha\le0.10$, whereas the MSP score with a pooled threshold answers 24.4\% (0.6B) and 52.6\% (4B) of shifted inputs at $\alpha=0.10$, with 7 and 0 violations. The learned risk models are therefore most useful on top of the fine-tuned and temperature-scaled model (B3), and a plain MSP score remains a reasonable choice for an unadapted backbone.

\begin{figure}[!htbp]
  \centering
  \includegraphics[width=\textwidth]{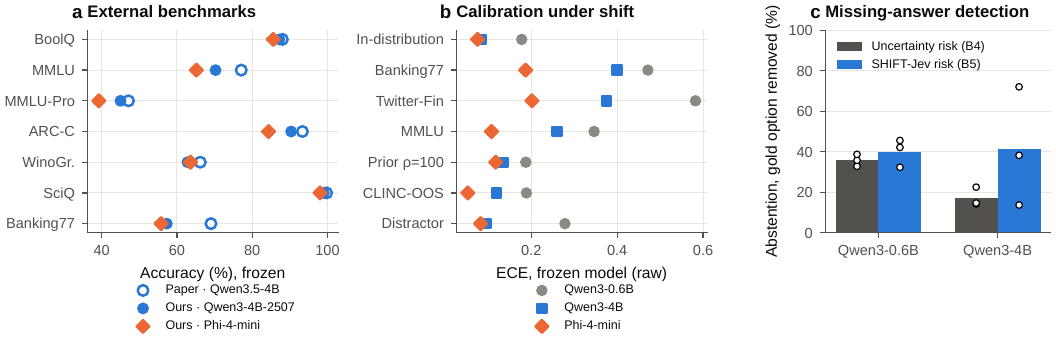}
  \caption{Scale and model family. (a)~Frozen accuracy on the base paper's external benchmarks. (b)~ECE of the frozen models under shift. (c)~Abstention when the correct option is removed (bars: mean of 3 seeds; dots: individual seeds).}
  \label{fig:scale}
\end{figure}

\subsection{Ablations of the base recipe and cost}
\label{sec:ablations}
The anchor ablation reproduces the base paper's finding (\cref{tab:anchor}, \cref{fig:ablations}). Decision accuracy hardly depends on the anchor weight (ID accuracy 79.8--80.2\%), but without anchors the KL divergence to the base model at the reply start rises from 0.0002 to 0.62 and 85 of 100 text-mode replies continue beyond the option identifier, against none with anchors. Banking77 accuracy falls from 81.2\% with 5 candidate intents to 18.6\% with all 77, while latency rises from 49 to 383~ms per item, and a rank-4 adapter on attention layers only (1.1M parameters) recovers about 60\% of the gain of the full rank-16 adapter (\cref{tab:sweeps}).

\begin{table}[!htbp]\centering\small
\caption{Base-paper anchor-weight ablation on Qwen3-0.6B (seed 1).}\label{tab:anchor}
\resizebox{\textwidth}{!}{%
\begin{tabular}{lrrrrrrrrrr}\toprule
Anchor weight & ID acc (\%) & ID ECE & Banking77 & MMLU & SciQ & KL start & KL answer & Text acc (\%) & Text $\equiv$ score (\%) & Runaway replies \\\midrule
$\lambda$ = 1 & 79.8 & 0.057 & 18.6 & 45.6 & 93.0 & 0.0002 & 0.0034 & 81 & 100 & 0/100 \\
$\lambda$ = 0.1 & 79.8 & 0.066 & 18.2 & 45.6 & 92.7 & 0.0008 & 0.0043 & 80 & 100 & 0/100 \\
$\lambda$ = 0 & 80.2 & 0.066 & 18.8 & 46.6 & 92.7 & 0.6204 & 0.2329 & 81 & 100 & 85/100 \\
\bottomrule\end{tabular}}
\end{table}

\begin{table}[!htbp]\centering\small
\caption{Sweeps on Qwen3-0.6B (seed 1). Left: candidate-set size on Banking77. Middle: labelled data for one \sj{} stage on Banking77 (the gate rejects the 75-item update). Right: adapter configuration (shift accuracy over nine environments; AURC of B5).}\label{tab:sweeps}
\resizebox{\textwidth}{!}{%
\begin{tabular}{rrrrr}\toprule $K$ & Acc (\%) & AURC B4 & AURC B5 & ms/item \\\midrule
5 & 81.2 & 0.056 & 0.052 & 49 \\
20 & 62.0 & 0.193 & 0.198 & 115 \\
77 & 18.6 & 0.695 & 0.693 & 383 \\
\bottomrule\end{tabular}\qquad\begin{tabular}{rlrr}\toprule Items & Gate & B77 (\%) & ID (\%) \\\midrule
75 & rollback & 18.2 & 79.0 \\
150 & full & 39.8 & 78.8 \\
300 & full & 46.8 & 78.5 \\
\bottomrule\end{tabular}\qquad\begin{tabular}{lrrrr}\toprule Config. & Params & ID & Shift & AURC \\\midrule
Frozen (B1) & 0 & 73.3 & 57.5 & 0.317 \\
r16, all linear & 10.1\,M & 79.7 & 64.6 & 0.224 \\
r4, attention & 1.1\,M & 77.1 & 62.1 & 0.241 \\
\bottomrule\end{tabular}}
\end{table}

\begin{figure}[!htbp]
  \centering
  \includegraphics[width=\textwidth]{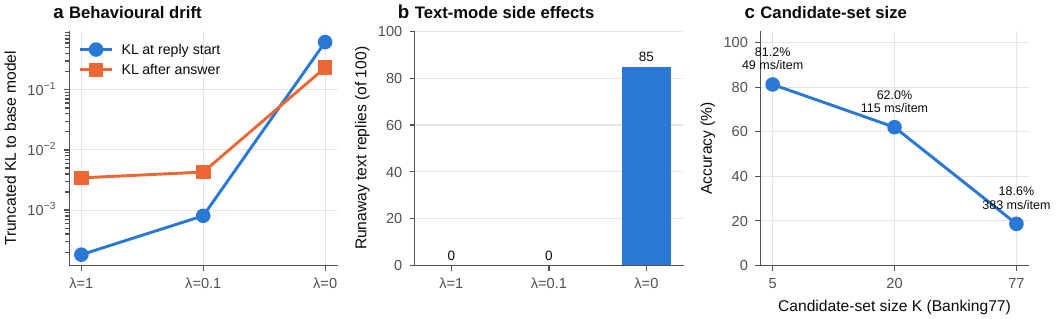}
  \caption{(a)~KL divergence to the base model at the reply start and after the answer, by anchor weight (log scale). (b)~Runaway text-mode replies out of 100. (c)~Accuracy and latency against candidate-set size on Banking77.}
  \label{fig:ablations}
\end{figure}

The reliability layer adds no forward passes except the rotation check, which doubles scoring cost and is therefore used only for the 0.6B model. Packed trie scoring takes 72~ms per item on Qwen3-0.6B and 92~ms on Qwen3-4B on a T4 GPU. Fine-tuning B2 takes 11.1 minutes on Qwen3-0.6B and 6.7 minutes on Qwen3-4B including reference caching, and a four-stage \sj{} run trains for 12.4 and 17.9 minutes, slightly less than experience replay (13.4 and 18.6 minutes). \Cref{tab:eff} gives the full cost breakdown.

\section{Discussion}
\label{sec:discussion}

\subsection{Implications for deploying decision models}
Under shift, both the risk score and the data used to certify its threshold determine whether selective risk stays within budget. All scores ranked errors almost identically, yet with the same pooled certification the MSP score violated $\alpha=0.10$ three times as often as a learned uncertainty score on Qwen3-0.6B, and split-conformal prediction sets, which guarantee marginal coverage rather than selective risk, violated it in a third to a half of the tests. This agrees with reports that LLM risk guarantees must be re-verified for each deployment~\citep{kotte2026crccertify} and enforced per group~\citep{salem2026hgcrc,rahman2026rolecrc}. No rule eliminated violations at a useful coverage: at $\alpha=0.10$ the best rules still violated 2--3 of 27 tests. Evaluations of decision models should therefore report realized selective risk and coverage under shift, not only calibration error, and should report coverage together with violation counts, since a rule that abstains on everything has no violations.

Drift tells the system which population an input belongs to, but in our experiments routing inputs to population-specific thresholds added little, because the proxy-shift threshold was rarely stricter than the ID threshold. A single conservative threshold, the stricter of those certified on ID and on proxy-shift data, gave the fewest violations; this post-hoc finding needs confirmation on further shifts. Drift is more useful for monitoring the stream, and label-shift monitoring should run by default: class-prior shift changes neither the inputs nor the model's confidence, so every confidence- and embedding-based detector missed it, while a $\chi^2$ test on predicted labels, which needs nothing beyond the decisions themselves, detected 95--99\% of shifted streams.

For continual adaptation, replay of boundary examples and severity-scaled anchoring to the previous model produced most of the accuracy gain on Qwen3-0.6B. The expected-gain gate added about one point on average, but it rejected every no-shift update, turned forgetting negative and, unlike a gate that tests for degradation, did not skip a regime on which the model could improve.

In practice, we recommend fine-tuning with KL anchors followed by temperature scaling, which lowers calibration error under shift two- to three-fold; certifying a learned uncertainty score with LTT on ID and proxy-shift data separately and applying the stricter threshold; running a predicted-label $\chi^2$ test next to a drift CUSUM; and gating updates on held-out gain with held-out and recertification sets larger than the 80 and 160 items used here when the backbone is strong.

\subsection{Limitations and threats to validity}
\label{sec:limits}
Several results are negative or weaker than the design intended. First, at $\alpha=0.05$ the ID-only certification of B5 failed in five of six Qwen runs and the model abstained on everything. A 400-item test half is too small to certify a 5\% risk unless the answered inputs are almost error-free, and the fixed-sequence order, which tests the strictest candidate first and stops at the first failure, discarded passing thresholds in one run. Larger calibration sets, or an ordering of candidates chosen on the first calibration half to maximize the chance of passing, would address this without affecting validity; we did not evaluate these changes. Second, routing by drift had little effect, as discussed above. Third, on Qwen3-4B the gate rejected a Twitter-Fin gain of 2.5--7.5 points that 80 paired items cannot confirm at $p<0.10$; larger held-out sets or a sequential test would recover such gains. Fourth, abstention on items whose correct option was removed varied between 14\% and 72\% across seeds on Qwen3-4B, so a dedicated ``none of the above'' mechanism may be needed. Fifth, for the frozen Phi-4-mini the additional signals increased realized risk, so feature-rich risk models should be validated for each backbone.

The statistical evidence is limited by three seeds per backbone and by evaluation items that are shared across seeds. We therefore avoided pooling seeds as independent pairs, but seed-level tests have little power, and violation counts over (environment, seed) tests are descriptive. The routing analysis in \cref{tab:routing} re-certifies thresholds offline from the stored calibration splits of the same runs and is post hoc.

The 4B comparisons with the base paper use a different checkpoint, and our fine-tuning mixture deliberately differs from the base paper's so that the test shifts remain unseen. Class-prior and candidate-set shifts are synthetic, although parameterized by severity, and JevBench~\citep{jevbench2026} was not available in our environment. The LTT and sign-test guarantees hold within a regime under exchangeability; for unseen shifts our violation counts are empirical. The present design was reached after earlier runs on the same benchmark exposed two failures, a gate that tested for degradation and drift features inside the risk score. All numbers reported here come from new runs with new seeds for the 4B model, new adapters and larger shift test sets, and the earlier designs are reported as ablations. Finally, the study covers two model families and models of up to 4B parameters on single-GPU budgets.

\section{Conclusion}
\label{sec:conclusion}
LLMs can serve as decision models, but deployment shifts their domain, label priors, label space and candidate sets. We evaluated such models on a benchmark of ten shifted environments and proposed \sj{}, which adds risk scoring with certified abstention, stream monitoring and gain-gated continual adaptation to the \jev{} interface. Fine-tuning with KL anchors and temperature scaling reduced calibration error under shift two- to three-fold. With the same certification, a learned uncertainty score kept selective risk within budget far more often than a max-probability score or conformal prediction sets, while drift-based routing added little and a single conservative threshold did best. A predicted-label $\chi^2$ test detected class-prior shifts that all other detectors missed. On Qwen3-0.6B, \sj{} was the most accurate continual learner in every seed, with negative forgetting and correct accept and reject decisions, and most of its gain came from replay and severity-scaled anchoring. On Qwen3-4B its gate was too conservative for the small remaining gains. Future work will repair the LTT ladder at strict risk levels, study sequential gain tests, larger per-regime calibration sets, label-shift-aware threshold correction~\citep{podkopaev2021labelshift}, more model families and naturally occurring shifts in production decision streams.

\section*{Data and Code Availability}
All datasets are public on the Hugging Face Hub. The implementation is a single Kaggle notebook that reproduces every table and figure in three 12-hour sessions on two T4 GPUs; it will be released with the camera-ready version.

\bibliographystyle{unsrtnat}
\bibliography{references}

\clearpage
\appendix
\section{Implementation Details}
\label{app:impl}
\paragraph{Answer interface and scoring.} We use each model's chat template with thinking disabled (Qwen3). The user message lists the state followed by ``\texttt{[1]} $o_1$ \dots\ \texttt{[K]} $o_K$'' and the instruction to answer with the bracketed identifier only, and the assistant turn is pre-filled with ``\texttt{Best answer: [}''. Identifier suffixes are taken from the tokenization of the full string, so multi-digit identifiers and merged bracket tokens are handled exactly, and the prefix-free property is checked for every item. For packed scoring, all trie nodes are appended after the prompt in one sequence; a token of a node attends to the whole prompt, to its ancestors and causally to itself, and position ids continue along each root-to-node path. When a model is loaded, packed and unpacked scores are compared on four items, and the packed path is used only if the maximum probability difference is below $10^{-4}$ (fp32) or 0.03 (fp16); the observed difference was at most $2\times10^{-5}$.

\paragraph{Fine-tuning objective.} The B2 loss is the sum of the cross-entropy over legal child tokens at each node of the correct option's trie path and three KL anchors with weight 1: on the probability mass of legal identifier tokens, on the distribution of illegal tokens, and on the full next-token distribution at the reply start and after the answer. Anchors are computed against the frozen model on correct-path and sampled off-path nodes with top-64 plus tail truncation.

\paragraph{Precision and hardware.} On T4 GPUs Qwen3-0.6B runs in fp32 because fp16 overflows in its attention, and the 4B and Phi models run in fp16 with fp32 LoRA weights. Each part of the study runs two lanes in parallel on the two T4 GPUs of one Kaggle session with a shared time budget, and results are merged afterwards.

\paragraph{Monitor and detectors.} The drift model standardizes the state embeddings of 400 ID validation items, projects them to 64 principal components and uses a cosine $k$-NN index with $k=10$; Mahalanobis~\citep{lee2018mahalanobis} and relative-Mahalanobis~\citep{ren2021rmd} distances with a Ledoit--Wolf covariance are computed for ablations. CUSUM uses a drift allowance of 0.5 on signals standardized by the reference pool, with the alarm level at the 95th percentile of no-shift maxima over 200 simulated streams. The Kolmogorov--Smirnov test uses windows of 25 items with step 5, and the $\chi^2$ test uses windows of 50 items over the predicted-category histogram with an ``unseen'' bin and add-0.5 smoothing.

\paragraph{Risk model and certification.} Features are standardized and the logistic regression uses $C=1$. Calibration sets are split 50/50 with a fixed seed. Candidate thresholds are the largest thresholds whose empirical selective risk on the first half is at most $c\alpha$ for $c\in\{0.2,0.35,0.5,0.65,0.8,0.9,1.0,1.1\}$. The Clopper--Pearson bound is the $1-\delta$ quantile of $\mathrm{Beta}(e+1,n-e)$ for $e$ errors among $n$ answered points. Split-conformal LAC uses the non-conformity score $1-P(y\mid x)$ on pooled calibration data and the $\lceil(n+1)(1-\alpha)\rceil/n$ empirical quantile.

\paragraph{Module C.} Shift measurement uses 250 incoming items and a reference pool of 200 ID validation items, extended by 100 items per adapted regime; a shift is \emph{validated} if either test gives $p<0.01$ with an effect of at least 5 points. The buffer stores 60 items per regime (half lowest-margin correct, half random) and replay adds 30\% of the new-data count. The gate uses 80 held-out new-regime items and the buffer's validation items (40 ID and 20 per adapted regime). EWC uses an online diagonal Fisher estimate over 100 examples with decay 0.9 and weight 50, and LwF uses tree-local distillation to the previous-stage model on new data with weight 1.

\section{Formal Guarantees and Module C Pseudocode}
\label{app:theory}

\begin{proposition}[Per-regime validity]
\label{prop:ltt}
If the calibration points of regime $g$ and a new test point from $g$ are exchangeable, and the risk model and the candidate thresholds depend only on the first calibration half, the fixed-sequence procedure of \cref{sec:moduleB} returns $\tau_g$ with $\Pr[R_{\mathcal D_g}(\tau_g)\le\alpha]\ge1-\delta$.
\end{proposition}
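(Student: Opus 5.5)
We condition on the first calibration half throughout. Given it, the risk model $r$ and the ordered candidate list $\tau^{(1)}<\tau^{(2)}<\dots<\tau^{(m)}$ (strict to permissive) are fixed. By exchangeability, the second-half points and a fresh test point from regime $g$ are i.i.d.\ (or at least exchangeable) draws from $\mathcal D_g$. We work with the i.i.d.\ reading and note at the end where exchangeability suffices. For each fixed candidate $\tau^{(j)}$, let $n_j$ be the number of second-half points with $r(x)\le\tau^{(j)}$ and $e_j$ the number of errors among them.

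Conditional on $n_j$ and on which points are answered, the answered points are i.i.d.\ from $\mathcal D_g$ restricted to $\{r\le\tau^{(j)}\}$. Hence $e_j\sim\mathrm{Binomial}(n_j,R_{\mathcal D_g}(\tau^{(j)}))$. Define the null $H_j:\ R_{\mathcal D_g}(\tau^{(j)})>\alpha$. The one-sided Clopper--Pearson upper bound $U_j$ at level $1-\delta$ satisfies $\Pr[U_j<\alpha\mid n_j]\le\delta$ whenever $H_j$ holds. This is the standard exact binomial argument: $U_j<\alpha<p$ requires $e_j$ to fall in a lower tail of mass at most $\delta$. Integrating over $n_j$ gives a valid p-value-type test $\Pr_{H_j}[\text{reject }H_j]\le\delta$. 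The case $n_j=0$ never certifies, since the bound is then $1$.

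Next we apply the fixed-sequence argument (Learn-then-Test with a fixed sequence). The procedure returns $\tau_g=\tau^{(J)}$, where $J$ is the last index before the first failure. The returned $\tau_g$ has $R_{\mathcal D_g}(\tau_g)>\alpha$ only if some true null $H_j$ was rejected. Let $j^\star$ be the first index in the order with $H_{j^\star}$ true. Any bad returned threshold has index $\ge j^\star$, and reaching any index $\ge j^\star$ requires passing $j^\star$ itself, i.e.\ rejecting $H_{j^\star}$. Therefore
\[
\Pr[R_{\mathcal D_g}(\tau_g)>\alpha]\le\Pr[\text{reject }H_{j^\star}]\le\delta .
\]
No monotonicity of $R$ in $\tau$ is needed. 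If no candidate passes, the model abstains everywhere, and we adopt the convention that the selective risk is $0\le\alpha$. Finally we remove the conditioning on the first half by averaging.

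The main obstacle is making the binomial claim rigorous under exchangeability rather than i.i.d. It is also where the data split matters: the candidates must be independent of the second half, otherwise the per-threshold test is invalid. I would handle it by stating the result for i.i.d.\ sampling, which exchangeable sequences satisfy by de Finetti conditionally on the directing measure, and by interpreting $R_{\mathcal D_g}$ as the risk under that measure.
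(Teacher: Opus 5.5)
Your proof is correct and follows the paper's own route. The paper's sketch also treats each Clopper--Pearson test as a valid level-$\delta$ test conditional on the number of answered second-half points. It then uses fixed-sequence testing: a violation requires rejecting a true null, so the family-wise error is at most $\delta$ without multiplicity correction. You spell out two things the sketch leaves implicit: that the first true null $j^\star$ must be rejected, and that the binomial step actually uses an i.i.d.\ (de Finetti) reading of the exchangeability assumption.
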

\begin{proof}[Proof sketch]
For each candidate threshold, the null hypothesis that its selective risk exceeds $\alpha$ is rejected only if the Clopper--Pearson bound, a valid level-$\delta$ test given the number of answered points, falls below $\alpha$. The order of the hypotheses is fixed before the second half is seen and testing stops at the first non-rejection, so the family-wise error rate is at most $\delta$ without multiplicity correction~\citep{angelopoulos2021ltt}. A violation requires a false rejection, which has probability at most $\delta$.
\end{proof}

\begin{proposition}[Controlled false acceptance]
\label{prop:gate}
If a candidate update is no more likely to be right where the current model is wrong than the reverse, the gate accepts it with probability at most $\gamma=0.10$; with two candidates, the probability of accepting any such update is at most $2\gamma$.
\end{proposition}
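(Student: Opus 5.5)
The plan is to reduce the gate's acceptance event to a rejection by a one-sided exact sign test, and then use the standard validity of that test under the stated null. Fix a candidate update $\theta'$ and the current model $\theta$. On the $m=80$ held-out new-regime items, record the paired correctness of the two models. Discard the ties, where both are right or both are wrong. Let $N$ be the number of discordant items and $W$ the number of them on which the candidate is right and the current model wrong.

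First I will make the hypothesis precise. I read ``no more likely to be right where the current model is wrong than the reverse'' as $p:=\Pr[\theta'\text{ right},\theta\text{ wrong}\mid\text{discordant}]\le 1/2$. I also assume the held-out items are i.i.d.\ (or exchangeable) draws from the new regime and independent of the candidate's training data. The gate keeps them disjoint by construction, since they are the held-out split. Conditional on $N=n$, it then follows that $W\sim\mathrm{Bin}(n,p)$.

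Second, the gate accepts $\theta'$ only if the one-sided sign-test $p$-value $\Pr_{\mathrm{Bin}(n,1/2)}[X\ge W]$ is below $\gamma$, together with the old-regime condition. Acceptance is therefore contained in this rejection event, and dropping the extra condition only enlarges the event. For $p=1/2$, the exact binomial test is valid at level $\gamma$, so the conditional probability of rejection is at most $\gamma$. For $p<1/2$, I use monotonicity: $\mathrm{Bin}(n,p)$ is stochastically smaller than $\mathrm{Bin}(n,1/2)$, and the rejection region $\{W\ge c_n\}$ is an upper set, so its probability is smaller still. Averaging over $N$ gives an unconditional bound of at most $\gamma$. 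The case $n=0$ never rejects.

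Third, the two-candidate case follows by a union bound. The half-trained and fully trained candidates are each tested on the same 80 items, so the tests are dependent, but Boole's inequality needs no independence. Each such candidate is accepted with probability at most $\gamma$, hence any is accepted with probability at most $2\gamma$. The probe stop only removes acceptances, so it does not increase the bound.

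The main obstacle is the first step. The candidates are trained before the held-out items are used, which gives independence. However, the half-trained candidate is compared after the probe, which looked at the same held-out items. I will handle this by noting that the probe can only stop training, never accept. The event ``accept'' is therefore still contained in the sign-test rejection for a candidate whose definition does not depend on the held-out labels, since the checkpoints are fixed by the training data alone. The argument thus goes through conditional on the training randomness.
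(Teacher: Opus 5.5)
Your proposal is correct and follows the paper's argument. Conditional on the number of discordant held-out pairs, the candidate's win count is dominated by $\mathrm{Bin}(n,\tfrac12)$, so the exact one-sided sign test has level $\gamma$, and a union bound over the two candidates gives $2\gamma$, with the retention check only removing acceptances. Your treatment of the probe by containment rather than by conditioning on it passing, and your explicit independence assumption for the held-out items, fill in details the paper's sketch leaves implicit; note only that the exact binomial conditional law needs the i.i.d.\ reading, not mere exchangeability.
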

\begin{proof}[Proof sketch]
Given the number of discordant held-out pairs, the candidate's number of wins is stochastically no larger than a Binomial$(n,\tfrac12)$ variable under the null hypothesis, so the one-sided exact sign test has level $\gamma$. A union bound covers the two candidates, and the retention condition can only remove acceptances.
\end{proof}

\begin{algorithm}[H]
\caption{\sj{} Module C: one adaptation stage}
\label{alg:modulec}
\begin{algorithmic}[1]
\Require current adapter $\theta$, new batch $\mathcal S$ with held-out set $\mathcal V$, replay buffer $\mathcal B$ with validation set $\mathcal V_{\mathcal B}$
\State measure shift on $\mathcal S$; obtain \textit{validated} flag and severity $\hat s$
\State set anchor weight $\lambda_0e^{-\kappa\hat s}$ and distillation weight $\propto(1+\hat s)$
\State train on $\mathcal S$ + replay from $\mathcal B$; at 50\%: \textbf{if} not \textit{validated} and the checkpoint wins $\le$ loses on $\mathcal V$ \textbf{then} stop
\For{each candidate $\theta'\in\{\theta^{50\%},\theta^{100\%}\}$}
  \State accept $\theta'$ iff sign test vs.\ $\theta$ on $\mathcal V$ gives $p<0.10$ and accuracy on $\mathcal V_{\mathcal B}$ drops $\le5$ points
\EndFor
\State deploy the accepted candidate with the best mean of new- and old-regime accuracy, else keep $\theta$
\State add boundary and random items of $\mathcal S$ to $\mathcal B$; add the regime to the reference pools; certify its threshold
\end{algorithmic}
\end{algorithm}

\section{Additional Results}
\label{app:extra}
\begin{table}[!htbp]\centering\small
\caption{External benchmarks for the larger backbones (accuracy, \%; Banking77 = fixed random 1,000 of 3,080 items). The paper's 4B model is Qwen3.5-4B.}\label{tab:ext4b}
\resizebox{\textwidth}{!}{%
\begin{tabular}{lrrrrrrrrr}\toprule
System & BoolQ & MMLU & MMLU-Pro & ARC-C & WinoGr. & SciQ & Macro-6 & Banking77 & ECE (6) \\\midrule
Base paper, Qwen3.5-4B, training-free & 88.00 & 77.08 & 47.12 & 93.33 & 66.20 & 99.75 & 78.58 & 69.03 & -- \\
Base paper, Qwen3.5-4B, LoRA $\lambda$=1 & 88.20 & 77.00 & 46.90 & 92.70 & 68.80 & 100.00 & 78.93 & 75.40 & -- \\
Ours, Qwen3-4B-Instruct-2507, B1 & 87.20 & 70.25 & 45.00 & 90.33 & 62.80 & 99.00 & 75.76 & 57.30 & 0.220 \\
Ours, Qwen3-4B-Instruct-2507, B2 (seed 1) & 88.20 & 69.42 & 45.00 & 90.17 & 62.20 & 99.25 & 75.71 & 57.10 & 0.105 \\
Ours, Phi-4-mini-instruct, B1 & 85.60 & 65.17 & 39.25 & 84.33 & 63.60 & 98.00 & 72.66 & 55.80 & 0.111 \\
\bottomrule\end{tabular}}
\end{table}

\begin{table}[!htbp]\centering\small
\caption{Frozen Phi-4-mini-instruct with abstention layers (one run; 9 shifted environments, 9 tests per $\alpha$). Columns as in \cref{tab:abst_small}.}\label{tab:phi}
\resizebox{\textwidth}{!}{%
\begin{tabular}{lrrrrrrr}\toprule
Risk score / threshold rule & Shift AURC $\downarrow$ & Cov@.05 (\%) & Cov@.10 (\%) & Risk@.10 (\%) & Violations $\alpha$=.05/.10/.20 & Viol.@.10 per seed & Gold removed: abstain (\%) \\\midrule
MSP score, pooled LTT $\tau$ & 0.1119 & 31.8 & 49.0 & 6.3 & 1 / 0 / 3 & 0 & 30.9 \\
Split-conformal LAC & 0.1119 & 43.2 & 68.4 & 14.3 & 4 / 4 / 1 & 4 & 30.9 \\
Uncertainty risk, pooled $\tau$ & 0.1291 & 32.9 & 47.9 & 5.2 & 1 / 2 / 1 & 2 & 25.0 \\
Uncertainty risk, ID-only $\tau$ & 0.1291 & 0.0 & 49.0 & 5.6 & 0 / 2 / 1 & 2 & 19.6 \\
Drift inside risk & 0.1609 & 33.5 & 45.7 & 12.1 & 5 / 3 / 4 & 3 & 22.1 \\
\sj{} layer (B5 rule) & 0.1595 & 0.0 & 48.7 & 7.9 & 0 / 2 / 1 & 2 & 14.2 \\
\bottomrule\end{tabular}}
\end{table}

\begin{table}[H]\centering\small
\caption{Computational cost on Kaggle Tesla T4 GPUs.}\label{tab:eff}
\begin{tabular}{lr}\toprule
Quantity & Value \\\midrule
Scoring latency, mean over environments, 0.6B / 4B / Phi-4-mini & 72 / 92 / 89 ms/item \\
Peak GPU memory, scoring 0.6B / 4B & 2.87 / 8.37 GB \\
Peak GPU memory, LoRA training 0.6B / 4B & 3.23 / 9.09 GB \\
LoRA parameters ($r$ = 16, all linear), 0.6B / 4B & 10.1 M / 33.0 M \\
Adapter storage (fp32), 0.6B / 4B & 40.4 / 132.1 MB \\
B2 training + reference cache, 0.6B (2,000 examples) & 8.8 + 2.3 min \\
B2 training + reference cache, 4B (1,000 examples) & 5.3 + 1.4 min \\
Four-stage continual run (training), \sj{} / replay, 0.6B & 12.4 / 13.4 min \\
Four-stage continual run (training), \sj{} / replay, 4B & 17.9 / 18.6 min \\
Session wall-clock, parts A / B / C (two T4 lanes each) & 8.1 / 9.4 / 11.1 h \\
\bottomrule\end{tabular}
\end{table}

\begin{table}[H]\centering\small
\caption{Hyperparameters.}\label{tab:hparams}
\begin{tabular}{ll}\toprule
Parameter & Value \\\midrule
LoRA rank / $\alpha$ / dropout / targets & 16 / 32 / 0 / all linear layers \\
Optimizer, learning rate, weight decay & AdamW, $10^{-4}$, 0.01 \\
Schedule & 10\% warm-up, linear decay to 10\%, 1 epoch, batch 8 \\
Gradient clipping, max length & 1.0, 1024 tokens \\
Anchor weight $\lambda$, truncation & 1, top-64 + tail \\
Risk levels $\alpha$, LTT $\delta$ & $\{0.05,0.10,0.20\}$, 0.1 \\
PCA dim., $k$ ($k$-NN) & 64, 10 \\
Stream lengths (pre / post), severities & 200 / 200, $\{0.25,0.5,1.0\}$ \\
Shift tests (logged; severity) & $p<0.01$, effect $\ge$5 points \\
Gain gate: level $\gamma$, retention margin, held-out items & 0.10, 5 points, 80 \\
$\kappa$, $\lambda_0$, $\lambda_{\mathrm{ret}}$, $\lambda_{\mathrm{dec}}$ & 1.5, 1, 1, 1 \\
Replay per regime, replay ratio, EWC weight, LwF weight & 60, 0.3, 50, 1 \\
Continual eval items per regime, 0.6B / 4B & 400 / 300 \\
Seeds & 20260922, 13, 42 \\
\bottomrule\end{tabular}
\end{table}

\end{document}